\documentclass{article}
\usepackage{iclr2027_conference,times}
\usepackage{amsmath,amssymb,amsthm,hyperref,graphicx,booktabs,xcolor,needspace,array,capt-of,placeins,colortbl,tabularx}
\usepackage[normalem]{ulem}
\usepackage{tikz,pgfplots}
\pgfplotsset{compat=1.18}
\definecolor{proofink}{HTML}{172B3B}
\definecolor{proofgrid}{HTML}{D9E0E4}
\definecolor{proofvalidfill}{HTML}{DCEBE2}
\definecolor{proofvalidedge}{HTML}{247348}
\definecolor{proofinvalidfill}{HTML}{F3E2E5}
\definecolor{proofblue}{HTML}{1469BD}
\definecolor{proofteal}{HTML}{298578}
\definecolor{proofgray}{HTML}{78828D}
\pgfplotsset{proofaxis/.style={
 axis line style={proofgray,thin},tick style={proofgray,thin},
 tick pos=left,axis lines=left,grid=major,
 grid style={proofgrid,line width=0.3pt},
 label style={font=\small,text=proofink},
 tick label style={font=\footnotesize,text=proofink},
 title style={font=\small\bfseries,text=proofink},
 every axis plot/.append style={line width=1.1pt},
 clip=false}}

\newcommand{\proofGroupSetupFigure}{%
\begin{figure}[!htb]
\centering\begingroup\color{proofink}
\begin{minipage}[t]{.46\linewidth}\centering\vspace{0pt}
\begin{tikzpicture}[x=3.0cm,y=3.0cm,font=\small,text=proofink]
\node[anchor=south,align=center,font=\small\bfseries] at (0.5,1.09) {(a) Target $q$};
\node[anchor=east] at (-0.05,0.833333) {\texttt{I}};
\filldraw[fill=proofvalidfill,draw=proofgrid,line width=0.45pt] (0.000000,0.666667) rectangle (0.500000,1.000000);
\node at (0.250000,0.833333) {0.40};
\filldraw[fill=proofinvalidfill,draw=proofgrid,line width=0.45pt] (0.500000,0.666667) rectangle (1.000000,1.000000);
\node at (0.750000,0.833333) {0};
\node[anchor=east] at (-0.05,0.500000) {\texttt{he}};
\filldraw[fill=proofinvalidfill,draw=proofgrid,line width=0.45pt] (0.000000,0.333333) rectangle (0.500000,0.666667);
\node at (0.250000,0.500000) {0};
\filldraw[fill=proofvalidfill,draw=proofgrid,line width=0.45pt] (0.500000,0.333333) rectangle (1.000000,0.666667);
\node at (0.750000,0.500000) {0.35};
\node[anchor=east] at (-0.05,0.166667) {\texttt{she}};
\filldraw[fill=proofinvalidfill,draw=proofgrid,line width=0.45pt] (0.000000,0.000000) rectangle (0.500000,0.333333);
\node at (0.250000,0.166667) {0};
\filldraw[fill=proofvalidfill,draw=proofgrid,line width=0.45pt] (0.500000,0.000000) rectangle (1.000000,0.333333);
\node at (0.750000,0.166667) {0.25};
\node[anchor=north] at (0.250000,-0.035) {\texttt{am}};
\node[anchor=north] at (0.750000,-0.035) {\texttt{is}};
\draw[proofvalidedge,line width=0.9pt] (0,0.666667) rectangle (0.500000,1);
\draw[proofvalidedge,line width=0.9pt] (0.500000,0) rectangle (1,0.666667);
\node[align=center,font=\small] at (0.5,-0.32) {$S_1=\{\texttt{I am}\}$\\$S_2=\{\texttt{he is},\texttt{she is}\}$};
\end{tikzpicture}
\end{minipage}
\hfill
\begin{minipage}[t]{.49\linewidth}\centering\vspace{0pt}
\begin{tikzpicture}[x=3.0cm,y=3.0cm,font=\small,text=proofink]
\node[anchor=south,align=center,font=\small\bfseries] at (0.5,1.09) {(b) CE predictor $p$};
\node[anchor=east] at (-0.05,0.800000) {\texttt{I}};
\filldraw[fill=proofvalidfill,draw=proofgrid,line width=0.45pt] (0.000000,0.600000) rectangle (0.400000,1.000000);
\node at (0.200000,0.800000) {0.16};
\filldraw[fill=proofinvalidfill,draw=proofgrid,line width=0.45pt] (0.400000,0.600000) rectangle (1.000000,1.000000);
\node at (0.700000,0.800000) {0.24};
\node[anchor=east] at (-0.05,0.425000) {\texttt{he}};
\filldraw[fill=proofinvalidfill,draw=proofgrid,line width=0.45pt] (0.000000,0.250000) rectangle (0.400000,0.600000);
\node at (0.200000,0.425000) {0.14};
\filldraw[fill=proofvalidfill,draw=proofgrid,line width=0.45pt] (0.400000,0.250000) rectangle (1.000000,0.600000);
\node at (0.700000,0.425000) {0.21};
\node[anchor=east] at (-0.05,0.125000) {\texttt{she}};
\filldraw[fill=proofinvalidfill,draw=proofgrid,line width=0.45pt] (0.000000,0.000000) rectangle (0.400000,0.250000);
\node at (0.200000,0.125000) {0.10};
\filldraw[fill=proofvalidfill,draw=proofgrid,line width=0.45pt] (0.400000,0.000000) rectangle (1.000000,0.250000);
\node at (0.700000,0.125000) {0.15};
\node[anchor=north] at (0.200000,-0.035) {\texttt{am}};
\node[anchor=north] at (0.700000,-0.035) {\texttt{is}};
\draw[proofvalidedge,line width=0.9pt] (0,0.600000) rectangle (0.400000,1);
\draw[proofvalidedge,line width=0.9pt] (0.400000,0) rectangle (1,0.600000);
\node[align=center,font=\small] at (0.5,-0.32) {$p(S_1)=0.16,\quad p(S_2)=0.36$};
\end{tikzpicture}
\end{minipage}
\endgroup
\caption{\presentationrevision{\textbf{Valid groups and a predictor that mixes them.} The target is the two-token example of Figure~\ref{fig:alpha-interpolation}. Rows are the first token and columns the second, as in that figure. Green cells are valid and red cells are invalid. In (b), row heights are $p_1$ and column widths are $p_2$, so cell areas equal joint probabilities. The two valid groups receive total probability $0.16+0.36=0.52$. The remaining $0.48$ falls on invalid combinations.}}
\label{fig:prop1-setting}
\end{figure}
}

\newcommand{\proofGroupWeightsFigure}{%
\begin{figure}[!htb]
\centering\begingroup\color{proofink}
\begin{minipage}[t]{.255\linewidth}\centering\vspace{0pt}
\begin{tikzpicture}[x=2.4cm,y=2.4cm,font=\small,text=proofink]
\node[anchor=south,align=center,font=\small\bfseries] at (0.5,1.09) {(a) Equal split};
\node[anchor=east] at (-0.05,0.750000) {\texttt{I}};
\filldraw[fill=proofvalidfill,draw=proofgrid,line width=0.45pt] (0.000000,0.500000) rectangle (0.500000,1.000000);
\filldraw[fill=proofinvalidfill,draw=proofgrid,line width=0.45pt] (0.500000,0.500000) rectangle (1.000000,1.000000);
\node[anchor=east] at (-0.05,0.375000) {\texttt{he}};
\filldraw[fill=proofinvalidfill,draw=proofgrid,line width=0.45pt] (0.000000,0.250000) rectangle (0.500000,0.500000);
\filldraw[fill=proofvalidfill,draw=proofgrid,line width=0.45pt] (0.500000,0.250000) rectangle (1.000000,0.500000);
\node[anchor=east] at (-0.05,0.125000) {\texttt{she}};
\filldraw[fill=proofinvalidfill,draw=proofgrid,line width=0.45pt] (0.000000,0.000000) rectangle (0.500000,0.250000);
\filldraw[fill=proofvalidfill,draw=proofgrid,line width=0.45pt] (0.500000,0.000000) rectangle (1.000000,0.250000);
\node[anchor=north] at (0.250000,-0.035) {\texttt{am}};
\node[anchor=north] at (0.750000,-0.035) {\texttt{is}};
\draw[proofvalidedge,line width=0.9pt] (0,0.500000) rectangle (0.500000,1);
\draw[proofvalidedge,line width=0.9pt] (0.500000,0) rectangle (1,0.500000);
\node[align=center,font=\small] at (0.5,-0.32) {$p(S_g)=0.25$};
\end{tikzpicture}
\end{minipage}
\hfill
\begin{minipage}[t]{.255\linewidth}\centering\vspace{0pt}
\begin{tikzpicture}[x=2.4cm,y=2.4cm,font=\small,text=proofink]
\node[anchor=south,align=center,font=\small\bfseries] at (0.5,1.09) {(b) Select group 2};
\node[anchor=east] at (-0.05,0.750000) {\texttt{he}};
\filldraw[fill=proofvalidfill,draw=proofgrid,line width=0.45pt] (0.000000,0.500000) rectangle (1.000000,1.000000);
\node[anchor=east] at (-0.05,0.250000) {\texttt{she}};
\filldraw[fill=proofvalidfill,draw=proofgrid,line width=0.45pt] (0.000000,0.000000) rectangle (1.000000,0.500000);
\node[anchor=north] at (0.500000,-0.035) {\texttt{is}};
\draw[proofvalidedge,line width=0.9pt] (0.000000,0) rectangle (1,1.000000);
\node[align=center,font=\small] at (0.5,-0.32) {$p(S_2)=1$};
\end{tikzpicture}
\end{minipage}
\hfill
\begin{minipage}[t]{.46\linewidth}\centering\vspace{0pt}
\begin{tikzpicture}
\begin{axis}[proofaxis,width=0.98\linewidth,height=4.4cm,
 xmin=0,xmax=1,ymin=0.6,ymax=1.42,
 xtick={0,0.5,1},ytick={0.75,1,1.25},
 xlabel={Probability $s$ of group 1},
 xlabel style={font=\footnotesize},ylabel={$W_\alpha(s)$},
 title={(c) Total group weight}]
 \addplot[proofgray,dashed,domain=0:1,samples=201] {x^0.6+(1-x)^0.6}
 node[pos=0.5,above,font=\footnotesize,text=proofink] {$\alpha=0.3$};
 \addplot[proofgray,domain=0:1] {1}
 node[pos=0.23,above,font=\footnotesize,text=proofink] {$\alpha=0.5$};
 \addplot[proofblue,domain=0:1,samples=201] {x^1.1+(1-x)^1.1}
 node[pos=0.5,below,font=\footnotesize,text=proofblue] {$\alpha=0.55$};
 \addplot[proofteal,domain=0:1,samples=201] {x^1.6+(1-x)^1.6}
 node[pos=0.5,above,font=\footnotesize,text=proofteal] {$\alpha=0.8$};
 \addplot[only marks,mark=*,mark size=1.6pt,proofink] coordinates {(0,1) (1,1)};
\end{axis}
\end{tikzpicture}
\end{minipage}
\endgroup
\caption{\presentationrevision{\textbf{Why the threshold is $1/m$.} For $m=2$, assigning group~1 probability $s$ at both positions gives $W_\alpha(s)=\sum_g p(S_g)^\alpha=s^{2\alpha}+(1-s)^{2\alpha}$. This is the total weight in Equation~\ref{eq:prop1-group-split}, not the objective itself. It is below one for $0<s<1$ when $\alpha>1/2$. At $\alpha=0.55$, (a) gives $W\approx0.933$, while (b) gives $W=1$.}}
\label{fig:prop1-weights}
\end{figure}
}

\newcommand{\proofGroupConditioningFigure}{%
\begin{figure}[!htb]
\centering\begingroup\color{proofink}
\begin{minipage}[t]{.31\linewidth}\centering\vspace{0pt}
\begin{tikzpicture}[x=2.8cm,y=2.8cm,font=\small,text=proofink]
\node[anchor=south,align=center,font=\small\bfseries] at (0.5,1.09) {(a) Original $p$};
\node[anchor=east] at (-0.05,0.800000) {\texttt{I}};
\filldraw[fill=proofvalidfill,draw=proofgrid,line width=0.45pt] (0.000000,0.600000) rectangle (0.400000,1.000000);
\filldraw[fill=proofinvalidfill,draw=proofgrid,line width=0.45pt] (0.400000,0.600000) rectangle (1.000000,1.000000);
\node[anchor=east] at (-0.05,0.425000) {\texttt{he}};
\filldraw[fill=proofinvalidfill,draw=proofgrid,line width=0.45pt] (0.000000,0.250000) rectangle (0.400000,0.600000);
\filldraw[fill=proofvalidfill,draw=proofgrid,line width=0.45pt] (0.400000,0.250000) rectangle (1.000000,0.600000);
\node[anchor=east] at (-0.05,0.125000) {\texttt{she}};
\filldraw[fill=proofinvalidfill,draw=proofgrid,line width=0.45pt] (0.000000,0.000000) rectangle (0.400000,0.250000);
\filldraw[fill=proofvalidfill,draw=proofgrid,line width=0.45pt] (0.400000,0.000000) rectangle (1.000000,0.250000);
\node[anchor=north] at (0.200000,-0.035) {\texttt{am}};
\node[anchor=north] at (0.700000,-0.035) {\texttt{is}};
\draw[proofvalidedge,line width=0.9pt] (0,0.600000) rectangle (0.400000,1);
\draw[proofvalidedge,line width=0.9pt] (0.400000,0) rectangle (1,0.600000);
\node[align=center,font=\small] at (0.5,-0.32) {$F_\alpha(p)\approx0.382$};
\end{tikzpicture}
\end{minipage}
\hfill
\begin{minipage}[t]{.31\linewidth}\centering\vspace{0pt}
\begin{tikzpicture}[x=2.8cm,y=2.8cm,font=\small,text=proofink]
\node[anchor=south,align=center,font=\small\bfseries] at (0.5,1.09) {(b) Condition on $S_1$};
\node[anchor=east] at (-0.05,0.500000) {\texttt{I}};
\filldraw[fill=proofvalidfill,draw=proofgrid,line width=0.45pt] (0.000000,0.000000) rectangle (1.000000,1.000000);
\node at (0.500000,0.500000) {1.00};
\node[anchor=north] at (0.500000,-0.035) {\texttt{am}};
\draw[proofvalidedge,line width=0.9pt] (0,0.000000) rectangle (1.000000,1);
\node[align=center,font=\small] at (0.5,-0.32) {$F_\alpha(p^{(1)})=0.400$};
\end{tikzpicture}
\end{minipage}
\hfill
\begin{minipage}[t]{.31\linewidth}\centering\vspace{0pt}
\begin{tikzpicture}[x=2.8cm,y=2.8cm,font=\small,text=proofink]
\node[anchor=south,align=center,font=\small\bfseries] at (0.5,1.09) {(c) Condition on $S_2$};
\node[anchor=east] at (-0.05,0.708333) {\texttt{he}};
\filldraw[fill=proofvalidfill,draw=proofgrid,line width=0.45pt] (0.000000,0.416667) rectangle (1.000000,1.000000);
\node at (0.500000,0.708333) {0.583};
\node[anchor=east] at (-0.05,0.208333) {\texttt{she}};
\filldraw[fill=proofvalidfill,draw=proofgrid,line width=0.45pt] (0.000000,-0.000000) rectangle (1.000000,0.416667);
\node at (0.500000,0.208333) {0.417};
\node[anchor=north] at (0.500000,-0.035) {\texttt{is}};
\draw[proofvalidedge,line width=0.9pt] (0.000000,0) rectangle (1,1.000000);
\node[align=center,font=\small] at (0.5,-0.32) {$F_\alpha(p^{(2)})\approx0.415$};
\end{tikzpicture}
\end{minipage}
\endgroup
\caption{\presentationrevision{\textbf{Conditioning improves the running predictor at $\alpha=0.55$.} Both conditioned predictors outperform (a). The comparison in Step~3 gives $F_\alpha(p)\approx0.382\le F_{\max}\sum_g p(S_g)^\alpha\approx0.388<F_{\max}\approx0.415$. Conditioning preserves the relative token probabilities within each group. It does not yet optimize them.}}
\label{fig:prop1-conditioning}
\end{figure}
}

\newcommand{\proofGroupCoverageFigure}{%
\begin{figure}[!htb]
\centering\begingroup\color{proofink}
\begin{minipage}[t]{.56\linewidth}\centering\vspace{0pt}
\begin{tikzpicture}
\begin{axis}[proofaxis,width=\linewidth,height=4.8cm,
 xmin=0,xmax=1,ymin=0.23,ymax=0.46,
 xtick={0,0.3213,1},xticklabels={0,0.321,1},ytick={0.25,0.35,0.40},
 xlabel={Probability $\varepsilon$ assigned to \texttt{she}},ylabel={$F_\alpha$},
 title={(a) Add a missing completion}]
 \addplot[proofblue,domain=0:1,samples=401] {0.35*(1-x)^0.55+0.25*x^0.55};
 \addplot[only marks,mark=*,mark size=1.7pt,proofink] coordinates {(0,0.35) (0.1,0.4007541361)};
 \addplot[only marks,mark=*,mark size=2pt,proofvalidedge] coordinates {(0.3213191589,0.4166952301)};
 \node[font=\footnotesize,anchor=north west,text=proofink] at (axis cs:0.02,0.345) {only \texttt{he is}};
 \node[font=\footnotesize,anchor=north west,text=proofink] at (axis cs:0.12,0.395) {$\varepsilon=0.1$};
 \node[font=\footnotesize,anchor=south west,text=proofvalidedge] at (axis cs:0.35,0.425) {best within $S_2$};
\end{axis}
\end{tikzpicture}
\end{minipage}
\hfill
\begin{minipage}[t]{.39\linewidth}\centering\vspace{0pt}
\begin{tikzpicture}[x=3.1cm,y=3.1cm,font=\small,text=proofink]
\node[anchor=south,align=center,font=\small\bfseries] at (0.5,1.09) {(b) Optimal predictor};
\node[anchor=east] at (-0.05,0.660660) {\texttt{he}};
\filldraw[fill=proofvalidfill,draw=proofgrid,line width=0.45pt] (0.000000,0.321319) rectangle (1.000000,1.000000);
\node at (0.500000,0.660660) {0.679};
\node[anchor=east] at (-0.05,0.160660) {\texttt{she}};
\filldraw[fill=proofvalidfill,draw=proofgrid,line width=0.45pt] (0.000000,-0.000000) rectangle (1.000000,0.321319);
\node at (0.500000,0.160660) {0.321};
\node[anchor=north] at (0.500000,-0.035) {\texttt{is}};
\draw[proofvalidedge,line width=0.9pt] (0.000000,0) rectangle (1,1.000000);
\node[align=center,font=\small] at (0.5,-0.32) {$F_\alpha\approx0.417>0.400$};
\end{tikzpicture}
\end{minipage}
\endgroup
\caption{\presentationrevision{\textbf{Selecting a group preserves all of its completions.} At $\alpha=0.55$, start with a predictor that always generates \texttt{he is} and give \texttt{she} probability $\varepsilon$ at the first position. The objective becomes $0.35(1-\varepsilon)^{0.55}+0.25\varepsilon^{0.55}$, which increases for small positive $\varepsilon$. Its maximum assigns positive probability to both valid completions. This best predictor within $S_2$ also outperforms the only predictor within $S_1$.}}
\label{fig:prop1-coverage}
\end{figure}
}

\hypersetup{hidelinks}
\definecolor{targetblue}{HTML}{005EA8}
\definecolor{stateochre}{HTML}{A65E00}
\DeclareRobustCommand{\cleanvar}[1]{{\color{targetblue}#1}}
\DeclareRobustCommand{\statevar}[1]{{\color{stateochre}#1}}
\usepackage{mdframed}
\newtheoremstyle{paperproposition}{0pt}{0pt}{\normalfont}{}{\bfseries}{.}{.5em}{}
\theoremstyle{paperproposition}
\newtheorem{proposition}{Proposition}
\definecolor{statementrule}{HTML}{245F9E}
\definecolor{statementfill}{HTML}{F5F8FB}
\definecolor{referencefill}{HTML}{EDF1F5}

\mdfdefinestyle{propositionstyle}{
 hidealllines=true,leftline=true,linewidth=0.9pt,
 linecolor=statementrule,backgroundcolor=statementfill,
 innerleftmargin=7pt,innerrightmargin=7pt,
 innertopmargin=5pt,innerbottommargin=1pt,
 skipabove=7pt,skipbelow=7pt,roundcorner=0pt,nobreak=true
}
\surroundwithmdframed[style=propositionstyle]{proposition}

\DeclareRobustCommand{\keyref}[3][]{\hyperref[#3]{#2~\ref*{#3}#1}}
\newcommand{\keyequation}[1]{%
 \begingroup
 \setlength{\fboxsep}{7pt}%
 \colorbox{referencefill}{$\displaystyle #1$}%
 \endgroup
}

\newmdenv[
 hidealllines=true,leftline=true,linewidth=1.3pt,
 linecolor=statementrule,backgroundcolor=statementfill,
 innerleftmargin=8pt,innerrightmargin=8pt,
 innertopmargin=6pt,innerbottommargin=6pt,
 skipabove=7pt,skipbelow=7pt,nobreak=true
]{methodtakeaway}

\newmdenv[
 hidealllines=true,leftline=true,linewidth=1.3pt,
 linecolor=statementrule,backgroundcolor=statementfill,
 innerleftmargin=8pt,innerrightmargin=8pt,
 innertopmargin=4pt,innerbottommargin=4pt,
 skipabove=6pt,skipbelow=5pt,nobreak=true
]{experimenttakeaway}

\title{Alpha Diffusion Language Models:\\
Factorization Alone Is Not the Problem}
\author{Nikita Gushchin$^{1,2}$ \quad Dmitry Baranchuk$^{3}$ \quad Alexander Korotin$^{1,2}$\\
\normalfont $^{1}$Applied AI Institute, Moscow, Russia \quad $^{2}$AXXX, Russia\\
\normalfont $^{3}$Yandex Research}
\iclrfinalcopy
\hypersetup{pdftitle={Alpha Diffusion Language Models: Factorization Alone Is Not the Problem},
 pdfauthor={Nikita Gushchin, Dmitry Baranchuk, Alexander Korotin}}
\newcommand{\revision}[1]{#1}
\DeclareRobustCommand{\newrevision}[1]{#1}
\newif\ifshowrevisions
\showrevisionsfalse
\ifshowrevisions
  \colorlet{revisiontext}{red}
\else
  \colorlet{revisiontext}{black}
\fi
\DeclareRobustCommand{\presentationrevision}[1]{\textcolor{revisiontext}{#1}}
\DeclareRobustCommand{\reviewrevision}[1]{#1}
\begin{document}
\maketitle
\fancyhead{}
\renewcommand{\headrulewidth}{0pt}

\begin{abstract}
Discrete diffusion language models can generate multiple tokens in parallel, but reducing the number of denoising steps can lead to inconsistent predictions. Standard cross-entropy training fits conditional token marginals, whereas parallel generation requires consistent joint predictions. We introduce \emph{Alpha Diffusion Language Models (AlphaDLM)}, trained with a sequence-level alpha loss that recovers cross-entropy in the limit of vanishing alpha and has a joint-mode optimum at alpha one. Our analysis characterizes how the objective and factorization jointly determine the fitted distribution. We identify conditions under which intermediate alpha preserves multiple valid completions while excluding invalid token combinations. Trained on TinyGSM, \newrevision{our method} achieves \revision{34.6\%} accuracy on GSM8K with only four model evaluations. We further scale the method to \revision{SDAR-1.7B} and evaluate it on code and mathematics benchmarks. These results show that changing the training objective can improve the accuracy-computation trade-off of factorized diffusion language models.
\end{abstract}

\begin{figure}[!ht]
\centering
\includegraphics[width=\linewidth]{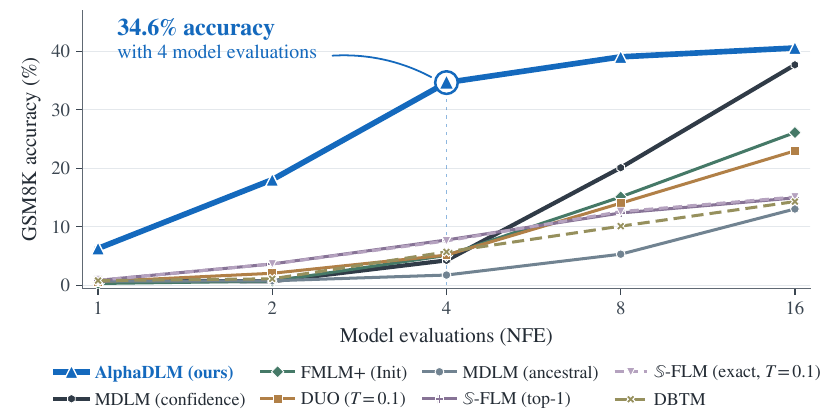}
\caption{\textbf{GSM8K accuracy with 1--16 model evaluations.} A single AlphaDLM \newrevision{(ours)} checkpoint \presentationrevision{with $k=16$} reaches 34.6\% in four evaluations. \reviewrevision{AlphaDLM and MDLM (confidence) use \presentationrevision{fixed-NFE confidence sampling} with the token-reveal schedule of FMLM+.} Other methods \newrevision{use} their respective samplers. \reviewrevision{\uline{Sources and protocols}: Table~\ref{tab:tinygsm-published} and Appendix~\ref{app:tinygsm-protocols}.}}
\label{fig:tinygsm-teaser}
\end{figure}

\section{Introduction}
Discrete diffusion language models generate text by reconstructing corrupted sequences \citep{sahoo2024mdlm,nie2025llada}. A single evaluation predicts distributions at many positions, allowing several tokens to be committed in parallel. \reviewrevision{Independently predicted tokens can each fit the visible context yet form an invalid sequence when generated together \citep{wu2025fastdllm}.}

\reviewrevision{Under cross-entropy training, the optimal factorized denoiser fits the conditional distribution of each token separately. When several completions are possible, independently combining these predictions can produce invalid sequences (\keyref{Figure}{fig:marginals-vs-modes}). One approach is to model token dependencies explicitly \citep{liu2025copula,xu2025energy}. However, a factorized distribution can already represent any single valid completion by assigning probability one to its tokens. Factorization therefore does not by itself prevent consistent parallel predictions. This suggests changing the training objective to favor token distributions that produce valid completions when combined.}

\reviewrevision{We introduce \emph{Alpha Diffusion Language Models (AlphaDLM)}, trained with sequence-level alpha loss, a form of generalized cross-entropy \citep{zhang2018gce}. This loss includes cross-entropy as the limit $\alpha\to0$, while at $\alpha=1$, predicting a joint mode minimizes the expected loss. These endpoints suggest a way to move from fitting token marginals toward fitting complete sequences. We analyze what happens between them when the denoiser is factorized. The optimum can preserve several valid completions while excluding invalid combinations, rather than selecting only one answer. \newrevision{We establish sufficient conditions for this behavior.}}

\reviewrevision{This change in the training objective is intended to improve generation when many tokens must be predicted in parallel within a small number of model evaluations. Recent approaches pursue this goal through trajectory supervision or self-distillation \citep{zhang2026t3d,chen2026dparallel,kim2026cdlm}. \newrevision{Our method} instead trains directly on corrupted data, without teacher-generated trajectories, and \newrevision{uses} the factorized denoiser. We evaluate whether this change improves accuracy at a fixed generation budget, using TinyGSM for comparisons between training objectives and SDAR-1.7B for experiments on code and mathematics. \newrevision{Figure~\ref{fig:tinygsm-teaser} summarizes the low-NFE result on GSM8K.} Appendix~\ref{app:related} \newrevision{further} discusses \uline{related work}.}

\begin{minipage}{\linewidth}
\paragraph{Contributions.}
We combine this training approach with an analysis of its factorized optimum and experiments on parallel generation. Our main contributions are:
\par\vspace{4pt}
\setlength{\topsep}{7pt}
\setlength{\itemsep}{5pt}
\setlength{\parsep}{0pt}
\begin{enumerate}
\item \reviewrevision{We introduce \emph{Alpha Diffusion Language Models}: diffusion language models whose factorized denoisers are trained with sequence-level alpha loss. We characterize the distribution that minimizes this loss within the class of factorized models and explain how it differs from the optimum of token-wise alpha loss (Sections~\ref{sec:alpha-loss}--\ref{sec:alpha-training}).}
\item \reviewrevision{We establish conditions under which the optimal factorized denoiser trained with sequence-level alpha loss assigns probability only to valid completions and can preserve several possible answers (Section~\ref{sec:mode-selection}, Appendix~\ref{app:compatible-support}).}
\item \reviewrevision{We empirically demonstrate that our method achieves 34.6\% GSM8K accuracy with four model evaluations and 7.66\% with a single evaluation after training on TinyGSM. Sequence-level training also achieves higher accuracy than token-wise alpha loss at matched evaluation budgets (Section~\ref{sec:tinygsm}). We also demonstrate improved accuracy--TPF trade-offs with SDAR-1.7B on code and mathematics benchmarks (Section~\ref{sec:sdar-math}).}
\end{enumerate}
\end{minipage}

\begin{figure}[!htb]
    \centering
    \includegraphics[width=\linewidth]{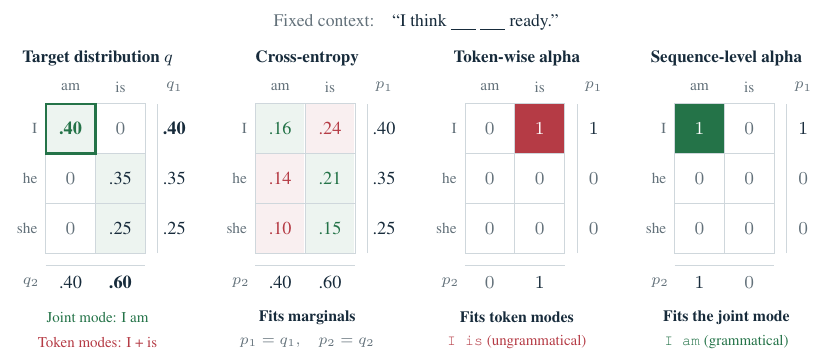}
    \caption{\revision{\textbf{Marginal fitting, token-mode fitting, and joint-mode fitting.}
    Cross-entropy assigns probability $0.48$ to ungrammatical completions.
    At $\alpha=1$, token-wise alpha loss predicts \texttt{I is}, while sequence-level alpha loss fits the joint mode \texttt{I am}.
    Each fitted distribution minimizes its population loss within the same factorized family.
    Green and red mark grammatical and ungrammatical outcomes with positive probability.}}
    \label{fig:marginals-vs-modes}
\end{figure}

\Needspace{7\baselineskip}
\section{Background}
\reviewrevision{To understand how the training objective affects parallel generation, we first review how discrete diffusion language models are trained and decoded (Section~\ref{sec:background-diffusion}). We then turn to generalized cross-entropy, a family of losses that includes standard cross-entropy as a limiting case (Section~\ref{sec:background-objectives}).}

\subsection{Discrete diffusion language models}
\label{sec:background-diffusion}
\reviewrevision{Discrete diffusion models learn to reverse a corruption process \citep{austin2021structured,lou2024sedd}. A denoiser predicts clean tokens from corrupted inputs, and a sampler uses these predictions to reconstruct the sequence.}

\Needspace{10\baselineskip}
\paragraph{Corruption process.}
\reviewrevision{Let $\cleanvar{x}\in\mathcal V^L$ be a clean sequence conditioned on a prompt or preceding blocks $c$, and let $\statevar{z}\sim q_t(\cdot\mid\cleanvar{x})$ be its corruption at time $t\in[0,1]$. Discrete corruption replaces tokens with vocabulary symbols or a mask. Masked diffusion independently \newrevision{keeps} each token or replaces it with $\mathtt m\notin\mathcal V$:}
\begin{equation}
 q_t(\statevar{z}_i\mid \cleanvar{x}_i)=\bar\alpha_t\mathbf{1}\{\statevar{z}_i=\cleanvar{x}_i\}+(1-\bar\alpha_t)\mathbf{1}\{\statevar{z}_i=\mathtt m\},
 \qquad q_t(\statevar{z}\mid \cleanvar{x})=\prod_{i=1}^L q_t(\statevar{z}_i\mid \cleanvar{x}_i).
\end{equation}
\newrevision{The probability of keeping a token unchanged decreases} from $\bar\alpha_0=1$ to $\bar\alpha_1=0$. Despite independent corruption, the clean tokens can remain dependent conditional on $\statevar{z}$.

\paragraph{Factorized denoiser.}
\newrevision{For this masking process, the optimal clean-token denoiser under cross-entropy training is independent of time $t$ \citep[Proposition~3.2]{zheng2025timeagnostic}. We therefore write the denoiser without explicit time conditioning.}
\newrevision{Let $M=\{i:\statevar{z}_i=\mathtt m\}$ be the masked positions and $\cleanvar{x}_M=(\cleanvar{x}_i)_{i\in M}$ their original tokens, in sequence order. Given the visible context, the denoiser predicts a categorical distribution at each position in $M$:}
\begin{equation}
 p_\theta(\cleanvar{x}_M\mid \statevar{z},c)=\prod_{i\in M}p_{\theta,i}(\cleanvar{x}_i\mid \statevar{z},c).
\end{equation}
\reviewrevision{Factorization holds within each evaluation. Later predictions depend on previously revealed tokens, so the generated sequence can contain dependencies despite the factorized denoiser.}

\paragraph{Training objective.}
\reviewrevision{The denoiser learns from corrupted inputs using weighted token cross-entropy \citep{sahoo2024mdlm,shi2024md4}:}
\begin{equation}
 \mathcal L_{\mathrm{CE}}(\theta)=
 \mathbb E_{\cleanvar{x},c,t,\statevar{z}}\left[-w(t)\sum_{i\in M}\log p_{\theta,i}(\cleanvar{x}_i\mid \statevar{z},c)\right],
\end{equation}
\reviewrevision{The expectation is over data, training times, and corruptions $\statevar{z}\sim q_t(\cdot\mid\cleanvar{x})$. With uniform times and $\bar\alpha_t=1-t$, the continuous-time masked diffusion bound uses $w(t)=1/t$.}

For a token value $v\in\mathcal V$, fixed $(\statevar{z},c)$, and unrestricted token predictors, the population cross-entropy is minimized by the conditional token marginals, $p_i^*(v\mid \statevar{z},c)=p_{\mathrm{data}}(\cleanvar{x}_i=v\mid \statevar{z},c)$.
Their product can differ from the joint posterior over the missing tokens. In particular, independently selecting their modes can produce invalid token combinations.

\paragraph{Parallel decoding.}
\label{sec:parallel-decoding}
\presentationrevision{Confidence sampling selects positions to reveal using the maximum predicted token probability at each position. In our confidence-sampling experiments, revealed tokens are chosen by argmax. \emph{Fixed-NFE confidence sampling} uses a prescribed evaluation budget and ranks positions by confidence to meet a reveal schedule~\citep{nie2025llada}. The fixed-NFE sampler used for TinyGSM also reveals positions above a threshold (Section~\ref{sec:tinygsm}). \emph{Adaptive confidence sampling} follows Fast-dLLM~\citep{wu2025fastdllm}: it reveals positions above a threshold, or the most confident position if none qualifies. It runs until all positions are filled, so NFE varies across examples.}

\paragraph{Blockwise diffusion.}
\reviewrevision{The same denoising procedure can operate within blocks generated autoregressively \citep{arriola2025block,cheng2025sdar}. Writing $\cleanvar{x}^{(b)}$ for block $b$ among $B$ blocks and $\cleanvar{x}^{(<b)}$ for all preceding blocks, the full sampling distribution is}
\begin{equation}
 \reviewrevision{p_\theta^{\mathrm{gen}}(\cleanvar{x}\mid c)=\prod_{b=1}^B p_\theta^{\mathrm{gen}}(\cleanvar{x}^{(b)}\mid \cleanvar{x}^{(<b)},c).}
\end{equation}
\reviewrevision{Each block uses iterative calls to $p_\theta$ with the preceding context cached.}

\Needspace{6\baselineskip}
\subsection{Generalized cross-entropy}
\label{sec:background-objectives}
\newrevision{The cross-entropy used to train the diffusion denoiser is the standard loss for categorical prediction. For a target class $y$ and predicted class probabilities $p$, it is $\ell_{\mathrm{CE}}(p,y)=-\log p(y)$. This familiar loss is a limiting case of a broader family, generalized cross-entropy, also known as alpha loss \citep{zhang2018gce}:}
\begin{equation}
 \reviewrevision{\ell_\alpha(p,y)=\frac{1-p(y)^\alpha}{\alpha},
 \qquad \lim_{\alpha\to0}\ell_\alpha(p,y)=-\log p(y).}
 \label{eq:background-alpha-loss}
\end{equation}
\reviewrevision{The parameter $\alpha$ controls the penalty for low target probabilities. Cross-entropy diverges as $p(y)\to0$, whereas alpha loss is bounded by $1/\alpha$ for $\alpha>0$. At $\alpha=1$, it reduces to $1-p(y)$.}

\section{Alpha Diffusion Language Models}
{
\newrevision{We first define our sequence-level objective by applying alpha loss to the joint probability of the masked tokens (Section~\ref{sec:alpha-loss}). We then analyze how factorization changes the distribution that minimizes this loss, compare sequence-level and token-wise training, and establish conditions for excluding invalid token combinations (Section~\ref{sec:mode-selection}). Finally, we describe how we normalize the loss across different mask counts and train the denoiser for use with parallel and blockwise samplers (Section~\ref{sec:alpha-training}).}

\subsection{Sequence-level denoising objective}
\label{sec:alpha-loss}
\reviewrevision{We seek a training objective that improves joint predictions while \newrevision{using} a factorized denoiser. Given a corrupted sequence $\statevar{z}$, the denoiser reconstructs the original tokens at the masked positions $M$. \newrevision{As in Section~\ref{sec:background-diffusion}, we omit explicit time conditioning because the conditional distribution of the original tokens given $\statevar{z}$ and $c$ does not depend on $t$.} The factorized denoiser assigns them the joint probability}
\[
 \reviewrevision{p_\theta(\cleanvar{x}_M\mid\statevar{z},c)=\prod_{i\in M}p_{\theta,i}(\cleanvar{x}_i\mid\statevar{z},c).}
\]
\reviewrevision{Applying cross-entropy to this probability still gives a sum of token-level losses, because the logarithm turns the product into a sum. For positive $\alpha$, generalized cross-entropy (Section~\ref{sec:background-objectives}) applied to this product no longer gives a sum of separate token losses. This yields the sequence-level alpha loss:}
\begin{equation}
\keyequation{
 \ell^{\mathrm{seq}}_\alpha(\theta;\cleanvar{x}_M,\statevar{z},c)
 =\newrevision{\frac{1-p_\theta(\cleanvar{x}_M\mid\statevar{z},c)^\alpha}{\alpha}}
 =\frac{1-\left[\prod_{i\in M}p_{\theta,i}(\cleanvar{x}_i\mid\statevar{z},c)\right]^\alpha}{\alpha}.
}
 \label{eq:matched-alpha-objectives}
\end{equation}
\nopagebreak[4]
\reviewrevision{This loss recovers the sum of token cross-entropies as $\alpha\to0$~\newrevision{(Section~\ref{sec:background-objectives})}.}

\subsection{From marginal fitting to joint mode fitting}
\label{sec:mode-selection}
\begingroup
\setlength{\abovedisplayskip}{5pt plus 1pt minus 1pt}
\setlength{\belowdisplayskip}{5pt plus 1pt minus 1pt}
\setlength{\abovedisplayshortskip}{3pt plus 1pt}
\setlength{\belowdisplayshortskip}{4pt plus 1pt minus 1pt}
\reviewrevision{Fix a corrupted sequence $\statevar{z}$ and context $c$. Let $q$ denote the conditional distribution of the original tokens at the masked positions:}
\begin{equation}
 \reviewrevision{q(\cleanvar{y})=p_{\mathrm{data}}(\cleanvar{x}_M=\cleanvar{y}\mid\statevar{z},c).}
 \label{eq:conditional-target}
\end{equation}
\reviewrevision{We compare the distribution minimizing the expected sequence-level alpha loss with and without factorization over the $m=|M|$ masked positions, indexed by $i\in M$. \newrevision{Figure~\ref{fig:alpha-interpolation} compares these two cases with token-wise training at the same values of $\alpha$.}}

\noindent\newrevision{\textbf{Without factorization.} We first allow the predictor to represent any joint distribution. For $0<\alpha<1$, we define the lower-temperature target}
\begin{equation}
 \keyequation{\newrevision{
 q_T(\cleanvar{y}):=\frac{q(\cleanvar{y})^{1/T}}{\sum_{\cleanvar{y}'}q(\cleanvar{y}')^{1/T}},
 \qquad T=1-\alpha.
 }}
 \label{eq:tempered-target}
\end{equation}
\newrevision{The temperature $T$ changes the target distribution, not the sampling procedure. The expected sequence-level alpha loss is minimized by fitting this target \citep[Proposition~1]{sypherd2022tunable}:}
\begin{equation}
 \newrevision{p_\alpha^{*,\mathrm{joint}}(\cleanvar{y})=q_T(\cleanvar{y}).}
 \label{eq:joint-temperature-optimum}
\end{equation}
\reviewrevision{At $\alpha=1$, any distribution supported on joint modes is optimal. For $\alpha>1$, only point masses on joint modes are optimal. Appendix~\ref{app:population-optima} gives \uline{the parameter correspondence and the elementary argument for $\alpha\ge1$}.}

\begin{methodtakeaway}[innertopmargin=4pt,innerbottommargin=4pt,skipabove=5pt,skipbelow=5pt]
\newrevision{\textbf{Without factorization, alpha loss fits a lower-temperature target.} For $0<\alpha<1$, more likely complete sequences receive a larger share of the probability (Figure~\ref{fig:alpha-interpolation}, top row).}
\end{methodtakeaway}

\reviewrevision{\textbf{With factorization.} Lowering the temperature does not remove token dependencies, so $q_T$ generally cannot be represented by a factorized denoiser. The optimal denoiser instead fits the marginals of a joint distribution that balances closeness to $q_T$ against dependence between tokens. For $0<\alpha<1$, this distribution solves \citep[Lemma~8, extended to $m$ factors]{lapidoth2019dependence}:}
\begin{equation}
 \pi_\alpha^*\in\operatorname*{arg\,min}_{\pi}
 \left\{D_{\mathrm{KL}}(\pi\Vert q_T)
 +\frac{\alpha}{1-\alpha}\operatorname{TC}(\pi)\right\},
 \quad
 p_\alpha^{*,\mathrm{fact}}=\prod_{i\in M}\pi_{\alpha,i}^*.
 \label{eq:factorized-target-tradeoff}
\end{equation}

\reviewrevision{The KL term measures the departure from $q_T$. Total correlation, $\operatorname{TC}(\pi)=D_{\mathrm{KL}}(\pi\Vert\prod_{i\in M}\pi_i)$, measures dependence between tokens and is zero when they are independent. Appendix~\ref{app:population-optima} gives \uline{the derivation, including zero probabilities}.}

\begin{methodtakeaway}[innertopmargin=4pt,innerbottommargin=4pt,skipabove=5pt,skipbelow=5pt]
\newrevision{\textbf{With factorization, the denoiser fits the marginals of $\pi_\alpha^*$.} The auxiliary distribution $\pi_\alpha^*$ balances closeness to $q_T$ against token dependence. The denoiser represents the product of its marginals (Figure~\ref{fig:alpha-interpolation}, middle row).}
\end{methodtakeaway}

\reviewrevision{\textbf{Why sequence-level rather than token-wise?} Applying alpha loss separately to each token probability gives}
\begin{equation}
 \reviewrevision{\ell^{\mathrm{tok}}_\alpha=\sum_{i\in M}\frac{1-p_{\theta,i}(\cleanvar{x}_i\mid\statevar{z},c)^\alpha}{\alpha}.}
 \label{eq:tokenwise-alpha-objective}
\end{equation}
\reviewrevision{For $0<\alpha<1$, minimizing its expectation gives}
\begin{equation}
 p_{\alpha,i}^{*,\mathrm{tok}}(\cleanvar{y}_i)
 =\frac{q_i(\cleanvar{y}_i)^{1/T}}{\sum_{\cleanvar{y}_i^{\prime}\in\mathcal V}q_i(\cleanvar{y}_i^{\prime})^{1/T}},
 \qquad T=1-\alpha.
 \label{eq:tokenwise-temperature-optimum}
\end{equation}
\reviewrevision{Here $q_i$ is the marginal of $q$ at position $i$, and $\cleanvar{y}_i\in\mathcal V$ is a token value. When $q$ factorizes, the two objectives have the same optimum.}

\begin{methodtakeaway}[innertopmargin=4pt,innerbottommargin=4pt,skipabove=5pt,skipbelow=5pt]
\newrevision{\textbf{Sequence-level fitting uses the joint target.} Token-wise loss sharpens each marginal of $q$ separately. Sequence-level loss instead fits the marginals of $\pi_\alpha^*$, which depends on the joint distribution $q$. Figures~\ref{fig:marginals-vs-modes} and~\ref{fig:alpha-interpolation} compare the endpoints and intermediate~$\alpha$, respectively.}
\end{methodtakeaway}

\endgroup

\par\medskip\noindent\begin{minipage}{\linewidth}
    \centering
    \includegraphics[width=0.90\linewidth]{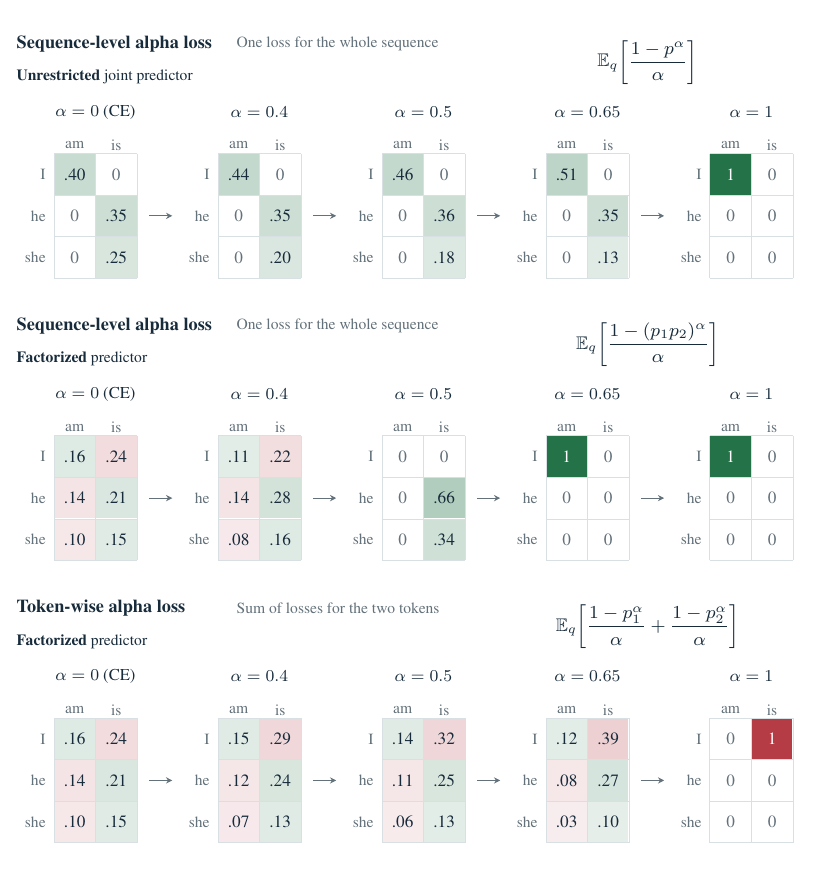}
    \captionof{figure}{\textbf{Interpolating from cross-entropy to mode fitting.}
    \newrevision{Rows show globally optimal distributions for Figure~\ref{fig:marginals-vs-modes}'s target: sequence-level loss without factorization, sequence-level loss with factorization, and token-wise loss with factorization. Columns use the same $\alpha$, with $\alpha=0$ denoting cross-entropy. At $\alpha=0.5$, the unrestricted optimum $q_T$ supports all three original completions, while the sequence-level factorized optimum supports only \texttt{he is} and \texttt{she is}. Both sequence-level rows reach \texttt{I am} at $\alpha=1$, whereas token-wise fitting reaches \texttt{I is}. Here $p=p(x_1,x_2)$ and $p_i=p_i(x_i)$. Green marks grammatical combinations, red ungrammatical ones. Entries are rounded.}}
    \label{fig:alpha-interpolation}
\end{minipage}\par\medskip

\newrevision{\textbf{Preserving multiple valid completions.} Can a factorized predictor exclude invalid token combinations while assigning positive probability to several completions? Here, valid completions are those with positive probability under $q$. The following result gives sufficient conditions for this behavior.}
\begin{proposition}[Multiple valid completions]
\label{prop:compatible-mixtures}
\reviewrevision{\newrevision{Consider $m\ge2$ masked tokens, and suppose the valid completions split into groups} such that any within-group combination of tokens is valid, and different groups use disjoint token sets at every position. For $1/m<\alpha<1$, every globally optimal factorized predictor selects one group and assigns positive probability to all completions in that group.}
\end{proposition}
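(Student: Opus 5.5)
The plan is to maximize the equivalent objective $F_\alpha(p)=\sum_{\cleanvar{y}} q(\cleanvar{y})\,p(\cleanvar{y})^\alpha$ over factorized $p=\prod_{i\in M}p_i$. Minimizing the expected sequence-level alpha loss is the same as maximizing this quantity. A global maximizer exists by compactness of the product of simplices and continuity of $F_\alpha$.

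\textbf{Setup.} Write group $g$ as a product set $S_g=\prod_{i\in M}A_{g,i}$. Here $A_{g,i}\subseteq\mathcal V$ is the set of tokens group $g$ uses at position $i$. The hypotheses say two things: every element of $S_g$ has $q>0$, and $A_{g,i}\cap A_{h,i}=\emptyset$ for $g\neq h$. The valid completions, i.e.\ the support of $q$, form the disjoint union $\bigcup_g S_g$. Set $s_{g,i}=p_i(A_{g,i})$, so that $\sum_g s_{g,i}\le1$ for every $i$. When all $s_{g,i}>0$, define the conditioned factorized predictor $p^{(g)}$ on $S_g$ by $p^{(g)}_i=p_i/s_{g,i}$ on $A_{g,i}$.

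\textbf{Step 1 (decomposition).} Since $q$ vanishes off $\bigcup_g S_g$,
\[
F_\alpha(p)=\sum_g\Big(\prod_{i\in M}s_{g,i}\Big)^{\alpha}F_{\alpha,g}\big(p^{(g)}\big),
\qquad
F_{\alpha,g}(r)=\sum_{\cleanvar{y}\in S_g}q(\cleanvar{y})\,r(\cleanvar{y})^\alpha .
\]
A group with some $s_{g,i}=0$ contributes zero. Let $F_{\max}=\max_g\max_r F_{\alpha,g}(r)$, where each inner maximum is over factorized $r$ on $S_g$; note $F_{\max}>0$. Then
\[
F_\alpha(p)\le F_{\max}\,W(p),\qquad W(p)=\sum_g\prod_{i\in M}s_{g,i}^\alpha .
\]

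\textbf{Step 2 (group selection; the key step).} I will show $W(p)\le1$, with equality iff some group has $s_{g,i}=1$ for all $i$. By AM--GM,
\[
\prod_i s_{g,i}^\alpha=\Big(\prod_i s_{g,i}^{1/m}\Big)^{m\alpha}\le t_g^{m\alpha},\qquad t_g=\tfrac1m\sum_i s_{g,i}.
\]
We also have $\sum_g t_g\le1$. Since $r=m\alpha>1$,
\[
\sum_g t_g^{r}\le\big(\max_g t_g\big)^{r-1}\sum_g t_g\le1 .
\]
The last bound is tight only if one $t_g$ equals $1$ and the rest are $0$. Equality in AM--GM then forces all $s_{g,i}=1$ for that $g$.

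This gives both directions. Placing all mass on $S_{g^*}$ with $p^{(g^*)}$ optimal, for $g^*$ attaining $F_{\max}$, achieves $F_{\max}$. Conversely, any global optimum satisfies $F_\alpha(p)=F_{\max}$, which forces $W(p)=1$. So $p$ is supported on a single $S_g$ and $p^{(g)}=p$ attains $F_{\max}$. The delicate points are the equality analysis and the treatment of zero $s_{g,i}$. This is where $\alpha>1/m$ is used; for $\alpha\le 1/m$ the bound $W\le1$ fails.

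\textbf{Step 3 (full coverage within the group).} Let $p$ be optimal and supported on $S_g$. Suppose some $v\in A_{g,i}$ has $p_i(v)=0$; pick $w\in A_{g,i}$ with $p_i(w)>0$. Move mass $\varepsilon$ from $w$ to $v$.

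\emph{Gain.} The terms with $\cleanvar{y}_i=v$ and $\cleanvar{y}_j\in\operatorname{supp}p_j$ for $j\neq i$ gain $\varepsilon^\alpha C$. Here $C>0$, because every such $\cleanvar{y}$ lies in $S_g$ and hence has $q(\cleanvar{y})>0$.

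\emph{Loss.} Only the terms with $\cleanvar{y}_i=w$ decrease, and since $p_i(w)>0$ they decrease by $O(\varepsilon)$.

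Since $\alpha<1$, $\varepsilon^\alpha$ dominates $\varepsilon$ for small $\varepsilon$, contradicting optimality. Therefore every $p_i$ has full support on $A_{g,i}$, and so every completion in $S_g$ receives positive probability.
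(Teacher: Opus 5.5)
Your proof is correct and follows the paper's argument essentially step for step: the same conditioning decomposition $F_\alpha(p)=\sum_g p(S_g)^\alpha F_\alpha(p^{(g)})$, the same AM--GM bound on the group weights made strict by $\alpha>1/m$, and the same $\varepsilon^\alpha$-versus-$\varepsilon$ perturbation for full coverage. The differences are cosmetic: you move mass from one supported token where the paper rescales the whole position, and you compare against global within-group maxima where the paper uses the conditioned predictors of $p$ itself. One small correction to your side remark: at $\alpha=1/m$ the bound $W\le1$ still holds, and only its equality case fails, which is why ties at the threshold allow optima that mix groups.
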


\newrevision{To illustrate the behavior described by Proposition~\ref{prop:compatible-mixtures}, we consider the two-token example ($m=2$) in Figure~\ref{fig:alpha-interpolation} at $\alpha=1/m=1/2$, the boundary of the proposition's range. Without factorization, the optimal predictor still assigns positive probability to all three original completions. With factorization, the first position predicts either \texttt{he} or \texttt{she}, and the second always predicts \texttt{is}. Independent predictions then produce only \texttt{he is} and \texttt{she is}, both valid completions. This is a global optimum for this example, as verified separately at the boundary. Appendix~\ref{app:compatible-support} gives \uline{the calculation for this example}. As $\alpha$ increases further, the sequence-level optimum switches to \texttt{I am}, whereas token-wise fitting approaches the invalid combination \texttt{I is}.}

\presentationrevision{Appendix~\ref{app:compatible-support} gives \uline{an illustrated elementary proof} and a counterexample showing why the support condition matters.}

\Needspace{8\baselineskip}
\subsection{Training and decoding}
\label{sec:alpha-training}
\newrevision{During training, examples have different numbers of masked tokens $m=|M|$. Multiplying more token probabilities makes the joint probability smaller, even when the probability assigned to each target token stays the same. We therefore normalize the log-probability by $m$. For $m>0$, let $s_\theta=\frac{1}{m}\sum_{i\in M}\log p_{\theta,i}(\cleanvar{x}_i\mid\statevar{z},c)$. With a fixed training parameter $k$, we use:}
\begin{equation}
 \ell_k(s_\theta)=\frac{1-\exp(k s_\theta)}{k}
 =\frac{1-p_\theta(\cleanvar{x}_M\mid \statevar{z},c)^{\alpha_{\mathrm{eff}}}}
 {m\alpha_{\mathrm{eff}}},
 \qquad \alpha_{\mathrm{eff}}=\frac{k}{m}.
 \label{eq:effective-alpha}
\end{equation}
\newrevision{This is sequence-level alpha loss (Equation~\ref{eq:matched-alpha-objectives}) with $\alpha_{\mathrm{eff}}=k/m$, scaled by $1/m$. At a fixed corrupted context, this scaling leaves the optimum unchanged. As $k\to0$, the loss reduces to mean token cross-entropy.}

\newrevision{For $k>0$, the gradient is the mean-token CE gradient multiplied by $\exp(k s_\theta)$ (Appendix~\ref{app:population-optima}). Larger $k$ gives less weight to examples whose target tokens receive low probabilities. We therefore initialize training from a pretrained denoiser.}

\newrevision{To relate this training rule to Section~\ref{sec:mode-selection}, substitute $\alpha=k/m$. Equation~\ref{eq:factorized-target-tradeoff} applies when $0<k<m$. Under the assumptions of Proposition~\ref{prop:compatible-mixtures}, $1<k<m$ guarantees that the optimal predictor assigns positive probability to every completion in one group and none outside it. For $k\ge m$, \uline{predicting a single most likely completion is optimal} (Appendix~\ref{app:population-optima}).}

\newrevision{At inference, we use the parallel or blockwise samplers described in Section~\ref{sec:parallel-decoding}.}
}

\begin{table}[!b]
\setlength{\abovecaptionskip}{0pt}
\setlength{\belowcaptionskip}{4pt}
\caption{\textbf{GSM8K accuracy (\%) \reviewrevision{$\uparrow$} at fixed NFE.} Recomputed MDLM, DUO, and $\mathbb{S}$-FLM results and published FMLM+ and DBTM values. \presentationrevision{IDLM values are reevaluated at 4--16 NFE and taken from the publication at 32--64 NFE.} \presentationrevision{MDLM (confidence) and AlphaDLM use the same fixed-NFE confidence sampler. Other baselines use their original samplers.} \presentationrevision{Bold values are the column maxima.} \uline{Protocols} \presentationrevision{are detailed} in Appendix~\ref{app:tinygsm-protocols}.}
\label{tab:tinygsm-published}
\centering
{\small\setlength{\tabcolsep}{3pt}\begin{tabularx}{\linewidth}{@{}l*{5}{>{\raggedleft\arraybackslash}X}@{}}
\toprule
Method / NFE & 4 & 8 & 16 & 32 & 64 \\
\midrule
MDLM (ancestral) \presentationrevision{\citep{sahoo2024mdlm}} & 1.7 & 5.3 & 13.0 & 22.1 & 28.2 \\
DUO ($T=0.1$) \presentationrevision{\citep{sahoo2025duo}} & 5.2 & 14.0 & 23.0 & 30.8 & 31.2 \\
$\mathbb{S}$-FLM (exact, $T=0.1$) \presentationrevision{\citep{deschenaux2026sflm}} & 7.6 & 12.6 & 15.1 & 16.9 & 16.9 \\
$\mathbb{S}$-FLM (top-1) & 7.7 & 12.4 & 14.9 & 16.5 & 17.4 \\
FMLM+ \presentationrevision{\citep{agarwal2026posterior}} & 2.9 & 8.7 & 13.4 & 19.0 & 19.1 \\
FMLM+ (Distill) & 3.9 & 10.3 & 18.7 & 21.6 & 23.4 \\
FMLM+ (Init) & 5.1 & 15.1 & 26.1 & 31.8 & 33.6 \\
DBTM \presentationrevision{\citep{tang2026dbtm}} & 5.7 & 10.1 & 14.3 & 16.8 & 16.2 \\
IDLM (MDLM) \presentationrevision{\citep{li2026idlm}} & \presentationrevision{0.4} & \presentationrevision{1.8} & \presentationrevision{6.5} & 12.8 & 14.9 \\
IDLM (DUO) & \presentationrevision{2.3} & \presentationrevision{7.1} & \presentationrevision{11.7} & 15.4 & 19.0 \\
\midrule
MDLM (confidence) & 4.2 & 20.1 & 37.7 & \newrevision{43.9} & \newrevision{40.6} \\
\rowcolor{statementfill}
\newrevision{\textbf{AlphaDLM (ours, $k=8$)}} & \newrevision{28.9} & \newrevision{\textbf{44.0}} & \newrevision{\textbf{49.2}} & \newrevision{\textbf{49.9}} & \newrevision{\textbf{50.0}} \\
\rowcolor{statementfill}
\textbf{AlphaDLM (ours, $k=16$)} & \textbf{34.6} & 39.0 & 40.6 & 40.9 & 40.9 \\
\bottomrule
\end{tabularx}
}
\end{table}

\section{Experiments}
\label{sec:experiments}
\newrevision{We compare training objectives in the TinyGSM-to-GSM8K setting, a common evaluation setup in recent work on diffusion and flow language models \citep{deschenaux2026sflm,agarwal2026posterior,li2026idlm} (Section~\ref{sec:tinygsm}). We then test transfer to SDAR-1.7B on code and mathematics (Section~\ref{sec:sdar-math}).} TinyGSM results use task accuracy versus the number of model evaluations (NFE). SDAR results use accuracy versus tokens per forward (TPF), defined as returned response tokens divided by denoising forwards plus one prompt-prefill forward per example.

\subsection{TinyGSM}
\label{sec:tinygsm}

\paragraph{Training setup.}
\presentationrevision{We train on TinyGSM~\citep{liu2023tinygsm} and evaluate Python solutions on all 1,319 GSM8K test problems~\citep{cobbe2021verifiers}, following the protocol of $\mathbb{S}$-FLM and FMLM+~\citep{deschenaux2026sflm,agarwal2026posterior}. Objective ablations start from a shared $k=1$ checkpoint at 200k updates and continue for 50k updates with the same architecture and optimizer settings. We test sequence-level $k\in\{1,2,4,8,16\}$ and token-wise $\alpha\in\{0.25,0.375,0.5,0.75\}$, with larger $k$ tested for one-evaluation generation. We use $\alpha_{\mathrm{eff}}=k/m$ (Equation~\ref{eq:effective-alpha}) and average $\ell_k$ over data and corruptions without time weighting. Each recipe uses one training seed and EMA weights. Appendix~\ref{app:tinygsm-protocols} gives \uline{the protocols}.}

\paragraph{Sampling protocol.}
\presentationrevision{We use fixed-NFE confidence sampling (Section~\ref{sec:parallel-decoding}) with the FMLM+ token-reveal schedule~\citep{agarwal2026posterior}. Each round reveals positions above confidence 0.999, adding highest-confidence positions as needed to distribute the remaining masks across the remaining rounds. Tokens are chosen by argmax. Regardless of the threshold, every example executes all prescribed forwards, even if all masks are filled earlier.}

\Needspace{7\baselineskip}
\noindent\presentationrevision{\textbf{Comparison with existing methods.} At four NFE, our AlphaDLM ($k=16$) reaches 34.6\% accuracy, compared with 4.2\% for MDLM using the same sampler and at most 7.7\% for the other baselines shown (Figure~\ref{fig:tinygsm-teaser}, Table~\ref{tab:tinygsm-published}).}

\presentationrevision{To isolate the effect of applying alpha loss jointly rather than token-wise, we compare runs with the same initialization, training budget, and sampler (Figure~\ref{fig:tinygsm-matched-budget}~(a,b)). At four NFE, sequence-level $k=16$ reaches 34.6\%, compared with 8.1\% for the $k=1$ continuation and 19.9\% for the best tested token-wise setting. At sixteen NFE, sequence-level $k=8$ reaches 49.2\%, compared with 44.4\% for the best token-wise setting. The best token-wise parameter is selected separately at each budget.}
\par\nopagebreak[4]
\begin{experimenttakeaway}
\presentationrevision{\textbf{With the same initialization, training budget, and sampler, sequence-level alpha loss gives higher accuracy than every tested token-wise setting at four and sixteen NFE.}}
\end{experimenttakeaway}

\par\medskip\noindent\begin{minipage}{\linewidth}
\setlength{\abovecaptionskip}{4pt}
\centering
\includegraphics[width=0.95\linewidth,trim=0 141.6bp 0 0,clip]{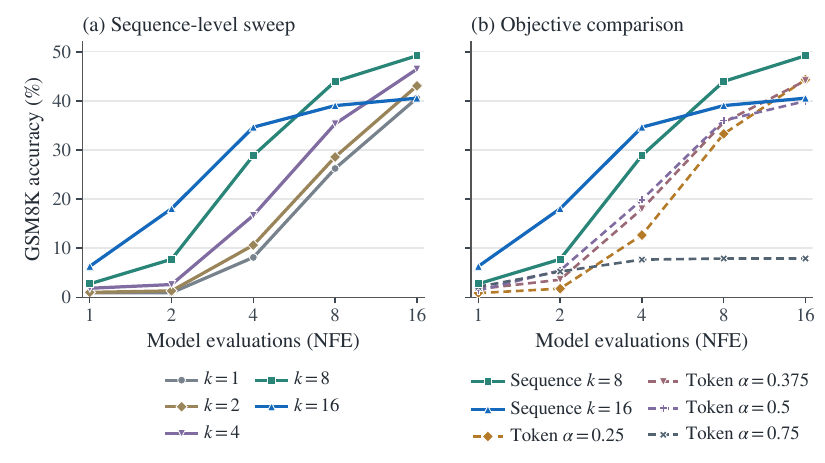}
\captionof{figure}{\presentationrevision{\textbf{How the objective changes parallel prediction.} (a) Sequence-level sweep. (b) Token-wise comparison. All runs continue the same 200k-update $k=1$ checkpoint to 250k using fixed-NFE confidence sampling. Values: Table~\ref{tab:tinygsm-matched-budget}. Panels (c,d) continue on page~\pageref{fig:tinygsm-one}.}}
\label{fig:tinygsm-matched-budget}

\end{minipage}\par\medskip

\Needspace{4\baselineskip}
\presentationrevision{\textbf{Choosing $k$.} The advantage of $k=16$ at four NFE does not persist as we allow more evaluations. At sixteen NFE, it reaches 40.6\%, matching $k=1$ but trailing $k=8$. Each curve uses one checkpoint, so this reversal shows that the best tested $k$ depends on the decoding budget. }

\Needspace{4\baselineskip}
\presentationrevision{To check whether the comparison depends on other experimental choices, we also evaluate \uline{adaptive confidence sampling} (Appendix~\ref{app:tinygsm-adaptive}) and \uline{CE initialization} (Appendix~\ref{app:tinygsm-initialization}). With CE initialization, sequence-level training still outperforms the tested token-wise settings at four and sixteen NFE.}

\Needspace{8\baselineskip}
\presentationrevision{\textbf{One-evaluation generation.} We next test whether this advantage persists when all tokens are predicted in a single model evaluation. Every masked position receives its argmax token, so a correct solution requires the independently predicted tokens to form a jointly correct answer (Figure~\ref{fig:tinygsm-one}~(c,d)). At $k=16$, our method solves 83 of 1,319 problems (6.29\%), compared with 28 (2.12\%) for the best tested token-wise setting. Accuracy increases from 0.91\% at $k=1$ to 101/1,319 (7.66\%) at $k=32$, then falls to 6.75\% at $k=64$. These results remain below multi-step accuracy and do not identify the true conditional mode.}

\par\medskip\noindent\begin{minipage}{\linewidth}
\centering
\phantomsection
\makeatletter\edef\@currentlabel{\getrefnumber{fig:tinygsm-matched-budget}}\makeatother
\label{fig:tinygsm-one}
\includegraphics[width=0.95\linewidth,trim=0 0 0 224bp,clip]{figures/tinygsm_objectives.pdf}
\par\smallskip
\raggedright Figure~\ref{fig:tinygsm-one} (continued): \textbf{One-evaluation generation.} (c,d) Exact-argmax accuracy for sequence-level and token-wise sweeps from the same initialization.
\end{minipage}\par\medskip

\Needspace{10\baselineskip}
\presentationrevision{\textbf{Ancestral sampling.} We also test whether the improvement depends on confidence-based position selection. In this sampler, reveal positions are random and independent of confidence (Figure~\ref{fig:tinygsm-ancestral}, Table~\ref{tab:tinygsm-fixed}). At 32 NFE, increasing $k$ from 1 to 16 raises accuracy from 8.2\% to 32.5\% with sampled tokens ($T=1$), and from 26.0\% to 37.1\% with argmax tokens ($T=0$). The gain at $T=0$ also rules out scalar logit sharpening as the sole explanation, since sharpening changes neither argmax tokens nor reveal probabilities. Appendix~\ref{app:passk} reports \uline{ancestral pass@$K$}.}
\par\nopagebreak[4]
\begin{experimenttakeaway}
\presentationrevision{\textbf{The gains persist when all tokens are predicted at once and when reveal positions are sampled independently of confidence.}}
\end{experimenttakeaway}

\par\medskip\noindent\begin{minipage}{\linewidth}
\centering
\includegraphics[width=\linewidth,trim=0 12bp 0 0,clip]{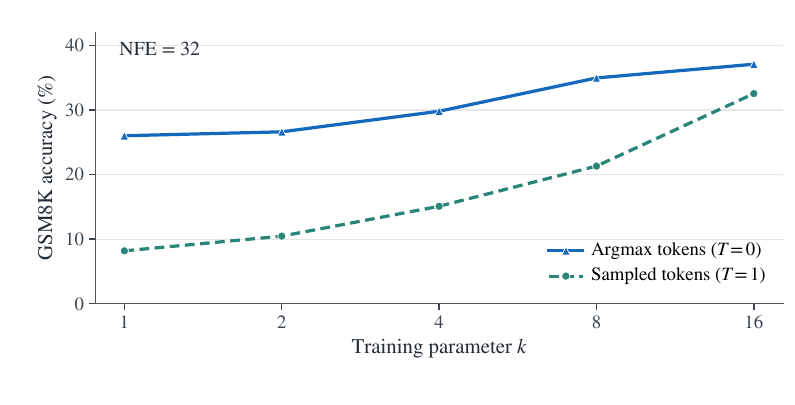}
\captionof{figure}{\presentationrevision{\textbf{Ancestral sampling at 32 NFE.} Reveal positions are random and independent of confidence. Tokens are selected by argmax ($T=0$) or sampled ($T=1$).} Values: Table~\ref{tab:tinygsm-fixed}.}
\label{fig:tinygsm-ancestral}
\end{minipage}\par\medskip

\Needspace{12\baselineskip}
\subsection{Transfer to SDAR on mathematics and code}
\label{sec:sdar-math}

\presentationrevision{We next test whether sequence-level alpha loss also improves blockwise generation with SDAR-1.7B-Chat~\citep{cheng2025sdar}. We fine-tune on Nemotron-SFT-Math-v4 for mathematics and OpenCodeInstruct for code, and compare our AlphaDLM with the original model, CE fine-tuning, and entropy-regularized fine-tuning (CE+CAP)~\citep{bie2025llada2}. We apply Equation~\ref{eq:effective-alpha} separately to each four-token response block. All methods use adaptive confidence sampling following Fast-dLLM~\citep{wu2025fastdllm} within four-token blocks (Section~\ref{sec:parallel-decoding}), with argmax tokens. We vary the threshold to compare accuracy and TPF. Figure~\ref{fig:sdar-math}~(a--d) uses $k=0.8$ for mathematics and $k=0.4$ for code. Panels (e,f) show the mathematics parameter sweep. Appendix~\ref{app:sdar-protocols} gives \uline{training and evaluation details}, including block-loss aggregation.}

\presentationrevision{\textbf{Mathematics.} At threshold 0.99, our method improves both accuracy and TPF over CE (Figure~\ref{fig:sdar-math}~(a,b)). Accuracy reaches 51.5\% on MATH500 and 77.2\% on GSM8K, compared with 50.6\% and 76.3\% for CE. TPF increases by 31.6\% and 27.1\%, respectively.}

\par\medskip\noindent\begin{minipage}{\linewidth}
\centering
\includegraphics[width=\linewidth]{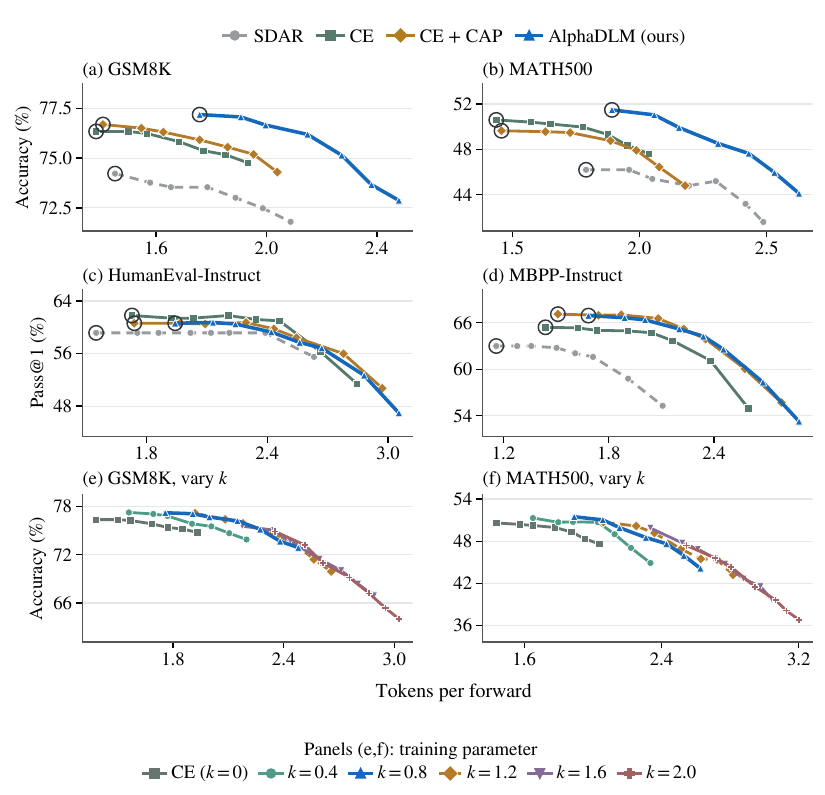}
\captionof{figure}{\textbf{Accuracy--TPF trade-offs with SDAR-1.7B.} (a--d) Objective comparisons on mathematics and code. AlphaDLM (ours) uses $k=0.8$ for mathematics and $k=0.4$ for code. (e,f) Mathematics parameter sweep. \presentationrevision{All panels use adaptive confidence sampling with argmax tokens.} Points average accuracy and TPF across runs at each confidence threshold. \presentationrevision{Rings in (a--d) mark threshold 0.99.} Appendix~\ref{app:sdar-protocols} provides \presentationrevision{\uline{standard-deviation bands} (Figures~\ref{fig:sdar-uncertainty}--\ref{fig:sdar-ablation-uncertainty})}.}
\label{fig:sdar-math}
\label{fig:sdar-code}
\label{fig:sdar-ablation}
\end{minipage}\par\medskip

\Needspace{8\baselineskip}
\presentationrevision{\textbf{Code.} At the same threshold, our method increases TPF over CE+CAP with similar pass@1 (Figure~\ref{fig:sdar-code}~(c,d)). On HumanEval-Instruct, both achieve 60.6\% pass@1, while TPF increases from 1.74 to 1.94. On MBPP-Instruct, TPF increases from 1.51 to 1.68 with a 0.19-percentage-point drop in pass@1. This comparison concerns threshold 0.99. Across the full sweep, peak accuracy is highest for CE on HumanEval-Instruct and CE+CAP on MBPP-Instruct.}

\presentationrevision{\textbf{Choosing $k$.} The mathematics sweep shows that larger $k$ can increase TPF at the cost of peak accuracy (Figure~\ref{fig:sdar-ablation}~(e,f)). Among the tested values, $k=0.4$ and $k=0.8$ give the highest peak accuracies. Increasing $k$ to 1.6 or 2.0 increases TPF but reduces peak accuracy on both benchmarks.}

\FloatBarrier

\Needspace{4\baselineskip}
\section{Discussion and Conclusion}
\reviewrevision{AlphaDLM applies alpha loss to the joint probability of masked tokens, interpolating between cross-entropy and joint-mode fitting. Our analysis explains why this sequence-level objective behaves differently from applying the same loss to individual tokens. For $0<\alpha<1$, alpha loss lowers the target temperature without factorization. With factorization, the optimal denoiser fits marginals of a joint distribution that balances closeness to this target against dependence between tokens. Under the conditions of Proposition~\ref{prop:compatible-mixtures}, it can exclude invalid combinations while preserving several valid completions.}

\reviewrevision{The experiments show that sequence-level alpha loss improves low-NFE accuracy while keeping the factorized architecture. The gains persist in fully parallel prediction and ancestral sampling, extending beyond confidence-based position selection. Together, these results show that the limitations of parallel prediction depend on the combination of factorization and the training objective. Alpha loss provides a way to improve this combination without explicitly modeling token dependencies within each denoising evaluation.}

\setlength{\bibsep}{3pt plus 1pt}
\bibliographystyle{iclr2027_conference}
\bibliography{alphadlm_references}

\clearpage
\appendix
\Needspace{12\baselineskip}
\section{Related Work}
\label{app:related}
\begingroup
\setlength{\parskip}{2pt}
\paragraph{Diffusion language models and decoding.}
Masked diffusion models learn token predictions under a corruption process \citep{sahoo2024mdlm,shi2024md4,nie2025llada}. Confidence-based decoding controls how many tokens are revealed at each evaluation \citep{wu2025fastdllm}, while SDAR combines parallel prediction within blocks with autoregression across blocks \citep{arriola2025block,cheng2025sdar}. \newrevision{Our method, AlphaDLM,} changes the training objective and can be used with either decoding structure.

\paragraph{Training for few-step generation.}
Several methods adapt diffusion language models for generation with fewer model evaluations. T3D \citep{zhang2026t3d} combines trajectory self-distillation with a reverse-KL-inspired discriminative objective and analyzes how trajectory supervision reduces conditional token dependence. dParallel \citep{chen2026dparallel} combines trajectory supervision with entropy minimization on correctly predicted tokens to encourage earlier parallel commitments. CDLM \citep{kim2026cdlm} distills teacher trajectories into a block-causal student using teacher-distribution matching and consistency objectives. \newrevision{Our method} shares the goal of improving generation under limited decoding budgets, but directly applies sequence-level alpha loss to training examples and their corruptions, without requiring teacher trajectories or distribution matching. Our analysis characterizes how this objective changes the joint target fitted by a factorized denoiser and establishes conditions under which its optimum preserves multiple valid completions while excluding invalid combinations.

\paragraph{Modeling token dependencies.}
Parallel prediction and its multimodality challenge predate diffusion language models, including non-autoregressive translation \citep{gu2018nat,huang2022nat}. For discrete diffusion, Discrete Copula Diffusion \citep{liu2025copula} supplements the denoiser with a generative model that supplies dependency information. EDLM \citep{xu2025energy} introduces a sequence-level energy correction. \citet{kim2026tensor} represent the conditional clean distribution through low-rank tensor decompositions, while \citet{bansal2025joint} train a lightweight sampler on top of a frozen diffusion model to approximate joint sampling. These approaches enrich the modeled dependencies or the sampling procedure. \newrevision{Our method uses} the factorized denoiser and changes its training objective. Our analysis asks which joint target this restricted predictor fits and when independent predictions can exclude invalid combinations without \newrevision{concentrating on a single completion}.

\paragraph{Evaluation and baselines.}
TinyGSM \citep{liu2023tinygsm} is a shared setting for generating mathematical solutions as programs. $\mathbb{S}$-FLM \citep{deschenaux2026sflm} compares hyperspherical flows with MDLM and DUO \citep{sahoo2025duo}. FMLM+ \citep{agarwal2026posterior} uses posterior refinement, DBTM \citep{tang2026dbtm} learns a discrete transport map, and IDLM \citep{li2026idlm} distills a pretrained diffusion teacher. Our comparison \newrevision{uses} their respective generation procedures. \reviewrevision{\newrevision{Our AlphaDLM} uses \presentationrevision{fixed-NFE confidence sampling} with the token-reveal schedule of FMLM+, while its objective ablations hold the sampler fixed.} Appendix~\ref{app:tinygsm-protocols} \newrevision{documents the \uline{baseline sources and sampling protocols}}.

\paragraph{Objectives and dependence.}
\looseness=-1 Generalized cross-entropy interpolates between cross-entropy and mean absolute error in classification \citep{zhang2018gce}. \reviewrevision{\citet{li2026beyond} study probability-based objectives for autoregressive supervised fine-tuning, including the same alpha-loss family applied \newrevision{\underline{token-wise}} to individual token probabilities. They show that downweighting low-probability tokens can help when the pretrained model already has strong task-relevant capabilities, while cross-entropy performs better when these capabilities are weak. We apply alpha loss to the joint probability of masked tokens and analyze how factorization changes the distribution that minimizes this sequence-level objective.} \citet{huang2022nat} analyze dependence lost by non-autoregressive marginal fitting and interpret alternative objectives through proxy targets. \citet{minka2005divergence} shows how the divergence shapes a factorized approximation, and \citet{lapidoth2019dependence} establish variational identities for product-distribution optimization. Our analysis connects this viewpoint to denoising objectives, characterizing the selected joint target, distinguishing sequence-level and token-wise fitting, and \newrevision{establishing conditions under which a factorized predictor assigns probability only to valid completions}. Recent analyses relate unmasking schedules and parallel-sampling error to total correlation \citep{zhao2026intrinsic,wen2026serial}. Our characterization concerns the joint target selected by training. Token-wise alpha loss and entropy-regularized SFT provide relevant empirical controls because both also change prediction confidence.

\par\endgroup

\clearpage
\section{Population optimum of sequence-level alpha loss}
\label{app:population-optima}
{
\begingroup\color{revisiontext}
We explain why alpha loss fits a lower-temperature target when the predictor is unrestricted, and what changes when it predicts tokens independently. The key step is to separate two questions: which joint distribution to fit, and how to predict its individual tokens. Appendix~\ref{app:compatible-support} proves Proposition~\ref{prop:compatible-mixtures} directly from the expected alpha loss, using an elementary argument illustrated on the two-token example.

Throughout, we fix the visible context. A sequence $\cleanvar{y}=(\cleanvar{y}_1,\ldots,\cleanvar{y}_m)$ fills the $m$ masked positions, $q$ is its true conditional distribution, and $p$ is the predictor. The space of sequences is finite. We optimize over distributions directly, without neural-network constraints, and initially assume $0<\alpha<1$. All logarithms are natural.

The expected loss depends on $p$ only through the following weighted sum:
\begin{equation}
 \mathbb E_{\cleanvar{y}\sim q}[\ell^{\mathrm{seq}}_\alpha(p;\cleanvar{y})]
 =\frac{1-F_\alpha(p)}{\alpha},
 \qquad F_\alpha(p)=\sum_{\cleanvar{y}}q(\cleanvar{y})p(\cleanvar{y})^\alpha.
 \label{eq:alpha-fitting-objective}
\end{equation}
Thus minimizing the loss means maximizing $F_\alpha$. Taking its logarithm also leaves the maximizing predictor unchanged.

\paragraph{Without factorization: the optimum is the lower-temperature target.}
\begin{proof}[Derivation of Equation~\eqref{eq:joint-temperature-optimum}]
We first allow $p$ to be any joint distribution. At an optimum, moving a small amount of probability from one sequence to another cannot improve $F_\alpha$. The gain from adding probability must therefore be the same for every sequence with $q(\cleanvar{y})>0$:
\begin{equation}
 \frac{\partial F_\alpha}{\partial p(\cleanvar{y})}
 =\alpha q(\cleanvar{y})p(\cleanvar{y})^{\alpha-1}
 =\lambda
 \quad\Longrightarrow\quad
 p(\cleanvar{y})\propto q(\cleanvar{y})^{1/(1-\alpha)}.
 \label{eq:positive-power-optimum}
\end{equation}
Here $\lambda$ is the common gain. Normalizing the probabilities gives exactly $p=q_T$, with $T=1-\alpha$, as defined in Equation~\ref{eq:tempered-target}.

Two facts ensure that this calculation finds the unique global optimum. First, an optimum puts no probability where $q=0$, since moving that probability to a sequence with $q>0$ increases $F_\alpha$. It puts positive probability on every sequence with $q>0$, since the gain in Equation~\ref{eq:positive-power-optimum} grows without bound as its predicted probability approaches zero. Second, $x^\alpha$ is strictly concave for $0<\alpha<1$. The weighted sum $F_\alpha$ is therefore strictly concave on this support, so the stationary solution is the unique maximum.
\end{proof}

This is the optimum in \citet[Proposition~1]{sypherd2022tunable}. Their parameter $\beta=1/(1-\alpha)$ gives $1-1/\beta=\alpha$ and $\beta/(\beta-1)=1/\alpha$, recovering both our loss and this optimum.

\paragraph{With factorization: the optimum fits the marginals of a selected target.}
\begin{proof}[Derivation of Equation~\eqref{eq:factorized-target-tradeoff}]
Now $p(\cleanvar{y})=\prod_i p_i(\cleanvar{y}_i)$. To fit the positions separately, we rewrite the objective using log-probabilities, which turn a product into a sum. An auxiliary joint distribution $\pi$ will supply the weights for this fit. It is used only in the proof. The following three steps specialize the product-distribution identity of \citet{lapidoth2019dependence} to $m$ positions.

\textbf{1. The auxiliary distribution gives an equivalent objective.}
For a fixed predictor with $F_\alpha(p)>0$, normalize the terms in its objective to obtain a distribution:
\begin{equation*}
 \pi_p(\cleanvar{y})=\frac{q(\cleanvar{y})p(\cleanvar{y})^\alpha}{F_\alpha(p)}.
\end{equation*}
For any joint distribution $\pi$ supported where $q>0$, expanding the definition of KL divergence gives
\begin{equation}
 \alpha\,\mathbb E_{\cleanvar{y}\sim\pi}\log p(\cleanvar{y})-D_{\mathrm{KL}}(\pi\Vert q)
 =\log F_\alpha(p)-D_{\mathrm{KL}}(\pi\Vert\pi_p).
 \label{eq:alpha-kl-rewrite}
\end{equation}
KL divergence is nonnegative and is zero exactly when its two distributions agree. Thus the left-hand side is at most $\log F_\alpha(p)$, with equality at $\pi=\pi_p$. Maximizing it over $\pi$ recovers the original objective exactly. This is the finite-distribution form of the Gibbs variational identity~\citep[Theorem~3.4]{wainwright2008graphical}.

\textbf{2. The best token predictions are the target marginals.}
We can now maximize over $p$ and $\pi$ together. Fix $\pi$ first. Factorization makes the log-probability a sum over positions:
\begin{equation}
 \mathbb E_{\cleanvar{y}\sim\pi}\log p(\cleanvar{y})
 =\sum_i\sum_{\cleanvar{y}_i}\pi_i(\cleanvar{y}_i)\log p_i(\cleanvar{y}_i)
 =-\sum_i H(\pi_i)-\sum_i D_{\mathrm{KL}}(\pi_i\Vert p_i).
\end{equation}
Here $\pi_i$ is the token marginal and $H(r)=-\sum_x r(x)\log r(x)$ measures uncertainty in a distribution $r$. Each position is an ordinary cross-entropy fitting problem, solved by $p_i=\pi_i$. Substituting these optimal token distributions leaves only the choice of $\pi$:
\begin{equation}
 \max_{p=\prod_i p_i}\log F_\alpha(p)
 =-\min_\pi J(\pi),
 \qquad J(\pi):=D_{\mathrm{KL}}(\pi\Vert q)+\alpha\sum_i H(\pi_i).
 \label{eq:variational-alpha-identity}
\end{equation}
Thus the optimal denoiser fits the marginals of a distribution minimizing $J$.

\textbf{3. The selected target balances fit and token dependence.}
The sum of token entropies equals the joint entropy plus total correlation:
\begin{equation*}
 \sum_i H(\pi_i)=H(\pi)+\operatorname{TC}(\pi).
\end{equation*}
Total correlation is the dependence measure defined in Section~\ref{sec:mode-selection}. To express $J$ using the lower-temperature target, write $q_T(\cleanvar{y})=q(\cleanvar{y})^{1/(1-\alpha)}/Z_T$, where $Z_T=\sum_{\cleanvar{y}}q(\cleanvar{y})^{1/(1-\alpha)}$. Using $\log q=(1-\alpha)(\log q_T+\log Z_T)$ and $D_{\mathrm{KL}}(\pi\Vert q)=-H(\pi)-\mathbb E_\pi\log q$ gives
\begin{equation}
 J(\pi)=(1-\alpha)D_{\mathrm{KL}}(\pi\Vert q_T)
       +\alpha\operatorname{TC}(\pi)-(1-\alpha)\log Z_T.
\end{equation}
The last term is constant in $\pi$. Removing it and dividing by $1-\alpha>0$ gives the claimed characterization:
\begin{equation*}
 \pi_\alpha^*\in\operatorname*{arg\,min}_{\pi}
 \left\{D_{\mathrm{KL}}(\pi\Vert q_T)
       +\frac{\alpha}{1-\alpha}\operatorname{TC}(\pi)\right\},
 \qquad p_\alpha^{*,\mathrm{fact}}=\prod_i\pi_{\alpha,i}^*.
\end{equation*}
Both optimization steps are exact. Every minimizing $\pi_\alpha^*$ gives an optimal predictor through its marginals. Conversely, any optimal predictor $p^*$ paired with $\pi_{p^*}$ attains the joint maximum in Steps 1--2. It must therefore satisfy $p_i^*=\pi_{p^*,i}$, and $\pi_{p^*}$ must minimize $J$. This proves the statement for all global optima.
\end{proof}

\textbf{Zero probabilities.} We use $0\log0=0$ and set KL divergence to infinity when its first argument assigns probability outside the support of its second. In Equation~\ref{eq:alpha-kl-rewrite}, a distribution $\pi$ assigning mass where $p=0$ gives value $-\infty$ on the left and cannot maximize it. A predictor with $F_\alpha(p)=0$ cannot be optimal, since predicting any sequence with $q>0$ gives a positive value. Thus zeros require no positivity assumption on the target or the optimal predictor.

\paragraph{Token-wise loss and the endpoints.}
For token-wise loss, the expected objective is a sum of separate objectives for individual positions. Applying the unrestricted calculation to each marginal $q_i$ gives Equation~\ref{eq:tokenwise-temperature-optimum}. If $q$ already factorizes, so does $q_T$. Taking $\pi=q_T$ then makes both terms in Equation~\ref{eq:factorized-target-tradeoff} zero, so sequence-level and token-wise loss have the same optimum.

At $\alpha=0$, both losses for a factorized predictor become the sum of token cross-entropies, whose optimal factors are the marginals of $q$. For $\alpha\ge1$, use $p(\cleanvar{y})^\alpha\le p(\cleanvar{y})$ to obtain
\begin{equation*}
 F_\alpha(p)\le\sum_{\cleanvar{y}}q(\cleanvar{y})p(\cleanvar{y})
 \le\max_{\cleanvar{y}}q(\cleanvar{y}).
\end{equation*}
Predicting a most likely sequence with probability one attains this bound and is possible with a factorized predictor. It is the unique optimum when the mode is unique. At $\alpha=1$, any joint distribution supported on tied modes is also optimal if it belongs to the allowed predictor class. For $\alpha>1$, equality requires a point mass on one mode.

These statements optimize distributions at one fixed context. Taking a logarithm preserves this optimum, but does not generally preserve an objective averaged over contexts with shared model parameters. Appendix~\ref{app:compatible-support} gives the additional support condition needed to exclude invalid combinations.
\endgroup

\paragraph{Numerical implementation.}
\reviewrevision{The mask-normalized sequence-level loss has a shared gradient weight determined by the joint probability of the original tokens at all masked positions:
\begin{equation}
 \nabla_\theta\left(\frac{\ell^{\mathrm{seq}}_\alpha}{m}\right)
 =p_\theta(\cleanvar{x}_M\mid \statevar{z},c)^\alpha\,\nabla_\theta\ell_{\mathrm{CE}},
 \qquad
 \ell_{\mathrm{CE}}=-\frac1m\log p_\theta(\cleanvar{x}_M\mid \statevar{z},c).
 \label{eq:alpha-loss-gradient}
\end{equation}
All missing-token predictions share this weight, whereas token-wise alpha loss assigns a separate weight to each position. Under $\alpha_{\mathrm{eff}}=k/m$, the shared weight is $\exp(k s_\theta)$. Small target probabilities suppress this weight more strongly at large $k$, motivating a warm start. Averaging log-probabilities makes $k$ comparable across mask counts, but does not remove this suppression.}

The implemented loss evaluates $1-\exp(k s_\theta)$ as $-\operatorname{expm1}(k s_\theta)$ to avoid cancellation near zero. Examples without active positions contribute zero.
}
\raggedbottom
\subsection{Groups of valid completions at intermediate alpha}
\label{app:compatible-support}
\begingroup\color{revisiontext}
The example in Figure~\ref{fig:alpha-interpolation} has two groups of valid completions: $\{\texttt{I}\}\times\{\texttt{am}\}$ and $\{\texttt{he},\texttt{she}\}\times\{\texttt{is}\}$. Combining tokens within either group is safe, while mixing the groups can produce invalid answers. We prove directly that alpha loss can select one group while giving positive probability to every completion within it. The proof uses only probabilities, powers, and the arithmetic--geometric mean inequality.

Selection of a single group was established for two tokens with one completion per group by \citet[Section~3.4]{minka2005divergence} and \citet[Lemma~32]{lapidoth2019dependence}. Here each group may contain multiple completions, with arbitrary dependencies in their target probabilities.

\paragraph{The support condition.}
For position $i$ and group $g$, let $A_{ig}$ be the allowed token set. The valid sequences form $G$ nonempty groups
\begin{equation}
 \{\cleanvar{y}:q(\cleanvar{y})>0\}=\bigcup_{g=1}^G S_g,
 \qquad S_g=A_{1g}\times\cdots\times A_{mg},
 \qquad A_{ig}\cap A_{ig'}=\varnothing\quad(g\ne g').
 \label{eq:compatible-support}
\end{equation}
Thus every combination within a group is valid, and a token at any one position identifies its group. The index $g$ refers to a group of completions, not a decoding block.
\endgroup

\begin{mdframed}[style=propositionstyle]
\textbf{Proposition~\ref{prop:compatible-mixtures} (Multiple valid completions, restated).} Consider $m\ge2$ masked tokens, and suppose the valid completions split into groups such that any within-group combination of tokens is valid, and different groups use disjoint token sets at every position. For $1/m<\alpha<1$, every globally optimal factorized predictor selects one group and assigns positive probability to all completions in that group.
\end{mdframed}

\begingroup\color{revisiontext}
\paragraph{Setting and running example.}
We fix the visible context and write $p(\cleanvar{y})=\prod_i p_i(\cleanvar{y}_i)$. By Equation~\ref{eq:alpha-fitting-objective}, minimizing expected alpha loss is equivalent to maximizing
\begin{equation*}
 F_\alpha(p)=\sum_{\cleanvar{y}}q(\cleanvar{y})p(\cleanvar{y})^\alpha.
\end{equation*}
The five steps below first show that an optimum selects one group, then that it covers the whole group. The figures use the same target as Figures~\ref{fig:marginals-vs-modes} and~\ref{fig:alpha-interpolation}, with $\alpha=0.55$ strictly inside the proposition's range for $m=2$. The example at the boundary $\alpha=1/2$ is treated separately below.

Figure~\ref{fig:prop1-setting} starts from the cross-entropy optimum, whose factors are the marginals of $q$. It assigns $0.48$ probability to invalid completions. We will compare this predictor with the predictors obtained by restricting it to either valid group.
\proofGroupSetupFigure

\paragraph{Conditioning on a group.}
Write $p_i(A_{ig})$ for the probability that position $i$ chooses a token from group $g$. Independence gives
\begin{equation*}
 p(S_g)=\prod_{i=1}^m p_i(A_{ig}),
 \qquad \sum_g p_i(A_{ig})\le1.
\end{equation*}
The inequality allows tokens outside all valid groups. If $p(S_g)>0$, conditioning on $S_g$ gives another factorized predictor, denoted $p^{(g)}$, with factors $p_i^{(g)}=p_i(\,\cdot\mid A_{ig})$. It gives probability only to sequences in $S_g$.

\begin{proof}[Proof of Proposition~\ref{prop:compatible-mixtures}]
\textbf{Step 1. Split the objective over groups.}
The target is zero outside the valid groups. Within a group of positive probability, $p(\cleanvar{y})=p(S_g)p^{(g)}(\cleanvar{y})$. Substitution into the objective gives
\begin{equation}
 F_\alpha(p)=\sum_{g:\,p(S_g)>0}p(S_g)^\alpha F_\alpha(p^{(g)}).
 \label{eq:prop1-group-split}
\end{equation}
Groups with $p(S_g)=0$ contribute zero. Every displayed $F_\alpha(p^{(g)})$ is positive, since $p^{(g)}$ gives positive probability to at least one valid completion. These values use the original $q$, without renormalizing it within a group.

Thus $F_\alpha(p)$ is a weighted sum of the values obtained by conditioning on individual groups. We next show that these weights sum to less than one unless $p$ already selects one group.

\Needspace{6\baselineskip}
\textbf{Step 2. Splitting probability between groups reduces the total weight.}
For $\alpha>1/m$ and any group, raising a number in $[0,1]$ to a larger power makes it no larger. The arithmetic--geometric mean inequality then gives
\begin{equation*}
 p(S_g)^\alpha\le p(S_g)^{1/m}
 =\left(\prod_{i=1}^m p_i(A_{ig})\right)^{1/m}
 \le\frac{1}{m}\sum_{i=1}^m p_i(A_{ig}).
\end{equation*}
Summing over groups and using their disjoint token sets yields
\begin{equation}
 \sum_g p(S_g)^\alpha
 \le\sum_g p(S_g)^{1/m}
 \le\frac{1}{m}\sum_{i=1}^m\sum_g p_i(A_{ig})
 \le1.
 \label{eq:prop1-group-weight-bound}
\end{equation}
The first inequality is strict whenever any $p(S_g)$ lies strictly between zero and one. Consequently, the sum can equal one only when exactly one group has $p(S_g)=1$. Figure~\ref{fig:prop1-weights} illustrates both the bound and the threshold $1/m$.
\proofGroupWeightsFigure
\FloatBarrier

\Needspace{6\baselineskip}
\textbf{Step 3. Every optimum selects one group.}
Let $p$ be an optimal factorized predictor. Its objective is positive, since predicting any valid completion with probability one gives $F_\alpha=q(\cleanvar{y})>0$. Therefore at least one group has positive probability. Let $F_{\max}$ be the largest value of $F_\alpha(p^{(g)})$ among these groups. Steps 1--2 imply
\begin{equation}
 F_\alpha(p)\le F_{\max}\sum_g p(S_g)^\alpha\le F_{\max}.
 \label{eq:prop1-conditioning-bound}
\end{equation}
But $F_{\max}$ is attained by one of the factorized predictors $p^{(g)}$. Optimality of $p$ requires $F_\alpha(p)\ge F_{\max}$, so equality must hold throughout. Since $F_{\max}>0$, Step~2 forces $p(S_g)=1$ for one group $g$. Every factor then satisfies $p_i(A_{ig})=1$, because their product is one and none can exceed one.

This proves the first half of the proposition: the predictor assigns no probability outside one valid group. Figure~\ref{fig:prop1-conditioning} shows the improvement from conditioning a predictor that has not yet selected a group.
\proofGroupConditioningFigure
\FloatBarrier

\Needspace{6\baselineskip}
\textbf{Step 4. A token that can contribute cannot have probability zero.}
This step uses $0<\alpha<1$ and applies to any target $q$. Hold all positions except $i$ fixed and group the objective by the token value $\cleanvar{v}\in\mathcal V$ at that position:
\begin{equation}
 F_\alpha(p)=\sum_{\cleanvar{v}\in\mathcal V}a_i(\cleanvar{v})p_i(\cleanvar{v})^\alpha,
 \qquad
 a_i(\cleanvar{v})=\sum_{\cleanvar{y}:\,\cleanvar{y}_i=\cleanvar{v}}
 q(\cleanvar{y})\prod_{j\ne i}p_j(\cleanvar{y}_j)^\alpha.
 \label{eq:within-group-token-fit}
\end{equation}
The coefficient $a_i(\cleanvar{v})$ is positive exactly when this token forms a valid completion with some tokens that the other positions predict with positive probability.

Suppose $a_i(\cleanvar{v})>0$ but $p_i(\cleanvar{v})=0$. Give this missing token probability $\varepsilon$, and multiply all existing probabilities at position $i$ by $1-\varepsilon$, where $0<\varepsilon<1$. The resulting predictor $\widetilde p$ is still factorized. Its objective changes by
\begin{equation*}
 F_\alpha(\widetilde p)-F_\alpha(p)
 =a_i(\cleanvar{v})\varepsilon^\alpha
  -\bigl[1-(1-\varepsilon)^\alpha\bigr]F_\alpha(p).
\end{equation*}
For $\alpha<1$, $(1-\varepsilon)^\alpha\ge1-\varepsilon$, so the loss is at most $\varepsilon F_\alpha(p)$. Hence
\begin{equation}
 F_\alpha(\widetilde p)-F_\alpha(p)
 \ge\varepsilon^\alpha\bigl[a_i(\cleanvar{v})-\varepsilon^{1-\alpha}F_\alpha(p)\bigr]>0
 \label{eq:prop1-missing-token-gain}
\end{equation}
for sufficiently small $\varepsilon$, because $\varepsilon^{1-\alpha}$ tends to zero. This contradicts optimality. Thus every token with $a_i(\cleanvar{v})>0$ has $p_i(\cleanvar{v})>0$: adding a missing valid possibility brings a gain of order $\varepsilon^\alpha$, which outweighs a loss of order $\varepsilon$.

\Needspace{6\baselineskip}
\textbf{Step 5. Every completion in the selected group has positive probability.}
Take a token $\cleanvar{v}\in A_{ig}$ in the group selected in Step~3. At each other position $j$, choose a token $\cleanvar{y}_j\in A_{jg}$ with $p_j(\cleanvar{y}_j)>0$. Such a token exists because $p_j(A_{jg})=1$. With $\cleanvar{y}_i=\cleanvar{v}$, these tokens form a valid completion by the support condition. Its contribution to $a_i(\cleanvar{v})$ is positive, so Step~4 gives $p_i(\cleanvar{v})>0$.

This holds for every token at every position within the selected group. Their products are positive, so every completion in that group has positive probability. Together with Step~3, this proves the proposition. If the selected group contains several completions, all of them remain possible.
\end{proof}

Figure~\ref{fig:prop1-coverage} illustrates Steps~4--5. At $\alpha=0.55$, the best predictor within the second group gives probabilities approximately $0.679$ and $0.321$ to \texttt{he is} and \texttt{she is}, respectively. Its objective is approximately $0.417$, above the value $0.400$ for the only completion in the first group. By Step~3, it is therefore a global optimum.
\proofGroupCoverageFigure
\FloatBarrier

\paragraph{Where the assumptions enter.}
The Cartesian group structure makes conditioning preserve factorization. Disjoint token sets give the probability bound in Step~2. The condition $\alpha>1/m$ makes that bound strict whenever the predictor splits probability between groups. The condition $\alpha<1$ makes a small addition to a missing token beneficial in Step~4. Finally, validity of every within-group combination makes every token eligible for that argument in Step~5. The requirement $m\ge2$ makes the interval $1/m<\alpha<1$ nonempty.

\paragraph{What happens at the threshold?}
The strict lower bound on $\alpha$ is necessary for a guarantee covering all targets with this support. To compare groups, define their best achievable objective values
\begin{equation}
 C_g(\alpha)=\max_{p=\prod_i p_i:\,p(S_g)=1}F_\alpha(p),
 \qquad C^\star=\max_g C_g(\alpha).
 \label{eq:compatible-group-score}
\end{equation}
These maxima exist because the spaces of token probabilities are compact and $F_\alpha$ is continuous.

\textbf{At $\alpha=1/m$.}
The arithmetic--geometric mean bound still gives $\sum_g p(S_g)^{1/m}\le1$. Equation~\ref{eq:prop1-group-split} therefore implies
\begin{equation*}
 F_\alpha(p)\le\sum_g p(S_g)^{1/m}C_g(\alpha)\le C^\star.
\end{equation*}
A best single-group predictor attains $C^\star$. If one group is uniquely best, positive probability on any other group makes the second inequality strict. Equality then forces $p(S_g)=1$ for the best group, and Steps~4--5 ensure that all its completions have positive probability.

If two groups tie, take optimal predictors $p'$ and $p''$ within them and mix their factors at every position: $p_i=(1-s)p'_i+sp''_i$ for $0<s<1$. This is a product of mixed token distributions. Since $m\alpha=1$, its objective is
\begin{equation*}
 F_\alpha(p)=(1-s)^{m\alpha}C^\star+s^{m\alpha}C^\star=C^\star.
\end{equation*}
It is optimal but also generates invalid cross-group combinations. For example, with $q(00)=q(11)=1/2$ and $\alpha=1/2$, independent uniform factors are optimal while assigning half their probability to invalid pairs.

\textbf{Below $1/m$.}
The group decomposition also gives the exact optimal allocation across groups. Let $s_g=\frac1m\sum_i p_i(A_{ig})$ be the mean probability assigned to group $g$ across positions. By arithmetic--geometric mean, $p(S_g)\le s_g^m$, so
\begin{equation*}
 F_\alpha(p)\le\sum_g C_g(\alpha)s_g^{m\alpha},
 \qquad \sum_g s_g\le1.
\end{equation*}
Every allocation with $\sum_g s_g=1$ attains this bound if each position chooses group $g$ with probability $s_g$ and uses the corresponding factor of a best within-group predictor. All $C_g$ are positive, so its maximum uses the full allocation $\sum_g s_g=1$. It remains to maximize this weighted sum of powers. For $0<m\alpha<1$, Equation~\ref{eq:positive-power-optimum} gives the unique optimal allocation
\begin{equation}
 p_i(A_{ig})=s_g^*
 =\frac{C_g(\alpha)^{1/(1-m\alpha)}}{\sum_{g'}C_{g'}(\alpha)^{1/(1-m\alpha)}}>0
 \quad\text{for every position }i.
 \label{eq:prop1-subthreshold-weights}
\end{equation}
To attain the bound, equality in arithmetic--geometric mean requires all positions to use these same group probabilities. With at least two groups, every optimum therefore generates invalid cross-group combinations. At the threshold, ties can allow such combinations. Above it, Proposition~\ref{prop:compatible-mixtures} excludes them under its support condition.

\paragraph{The two-token example at $\alpha=1/2$.}
In Figure~\ref{fig:alpha-interpolation}, the target probabilities of \texttt{I am}, \texttt{he is}, and \texttt{she is} are $0.4$, $0.35$, and $0.25$. The first group has only one completion, so $C_1=0.4$. In the second group, the second token is always \texttt{is}, leaving only the first token to fit. Equation~\ref{eq:positive-power-optimum} gives
\begin{equation*}
 C_2(\alpha)=\left(0.35^{1/(1-\alpha)}+0.25^{1/(1-\alpha)}\right)^{1-\alpha}.
\end{equation*}
At $\alpha=1/2$, $C_2=\sqrt{0.185}>0.4$. The unique optimum therefore chooses the second group, with probabilities proportional to $0.35^2$ and $0.25^2$. These are $49/74$ for \texttt{he is} and $25/74$ for \texttt{she is}. This establishes the example at the boundary, which the strict inequality in Proposition~\ref{prop:compatible-mixtures} does not cover. The group values become equal at $\alpha_\star\approx0.6162$. Since $m\alpha_\star>1$, either single-group predictor is then optimal, but mixtures between the groups are not.

\paragraph{Why the support condition matters.}
Consider instead two binary tokens with joint probabilities
\begin{equation*}
 q=\begin{pmatrix}0.4&0.3\\0.3&0\end{pmatrix},
 \qquad\text{rows and columns indexed by }0,1.
\end{equation*}
The pairs $00$, $01$, and $10$ are valid, while $11$ is not. These valid pairs cannot be divided into the disjoint Cartesian groups required by the proposition.

For every $0<\alpha<1$, an optimal factorized predictor assigns positive probability to all four pairs. To see this, use Step~4 of the proof above, which applies to any target. Token $0$ has a positive coefficient at either position regardless of the other position's distribution, because it forms a valid pair with both $0$ and $1$. Thus $p_1(0)>0$ and $p_2(0)>0$. Token $1$ then also has a positive coefficient at each position, since it forms a valid pair with the other position's $0$. Hence $p_1(1)>0$ and $p_2(1)>0$, giving positive probability to the invalid pair $11$.

Factorization and increasing alpha alone therefore do not guarantee valid samples for $0<\alpha<1$. The support condition is essential. Proposition~\ref{prop:compatible-mixtures} concerns the population optimum at a fixed context, not convergence of neural-network training.
\endgroup

\section{\revision{TinyGSM comparison protocols}}
\label{app:tinygsm-protocols}
{
\reviewrevision{We document the TinyGSM training and evaluation protocols and the sources of baseline results. Appendix~\ref{app:tinygsm-adaptive} then extends the fixed-budget comparisons to adaptive confidence sampling.}

\paragraph{AlphaDLM \newrevision{(ours)}.}
The teaser uses one EMA checkpoint at 250k updates, with $k=16$ after $k=1$ pretraining to 200k. Evaluation uses 512-token sequences, the SmolLM-135M tokenizer, exact argmax token selection, and untempered confidence scores. Each round reveals the union of positions above confidence 0.999 and the highest-confidence $\lceil m/r\rceil$ remaining positions, where $m$ is the current mask count and $r$ is the number of rounds left. Revealed tokens remain fixed. \presentationrevision{This fixed-NFE confidence sampler} adapts the commitment rule of FMLM+\presentationrevision{~\citep{agarwal2026posterior}} to an absorbing-mask denoiser with a fixed number of model evaluations.

Every example executes all $R$ forwards, including rounds after all masks have been filled. Accuracy is the observed correct fraction over 1,319 test problems. These are single-checkpoint estimates, without repeated-training uncertainty.

\paragraph{Program verification.}
The shared GSM8K verifier extracts the first fenced code block when present, discards text before the first function definition, and attempts to remove trailing text that prevents parsing. It executes \texttt{simple\_math\_problem()} with a 5\,s time limit and reads its numeric return value or, when unavailable, a numeric answer from its printed output. This value is compared with the number following \texttt{\#\#\#\#} in the reference answer. Integers must match exactly, while floating-point comparisons use absolute tolerance $10^{-3}$. Missing functions, parsing or execution failures, timeouts, and missing numeric answers count as incorrect.

\paragraph{Common-initialization objective comparisons.}
The $k=1$ checkpoint at 200k updates initializes the sequence-level and token-wise continuations, which are evaluated at 250k. Table~\ref{tab:tinygsm-training} specifies the shared training recipe. Sequence-level and token-wise objectives first normalize over the active positions of each example and then average over examples. Neither objective uses an additional $1/t$ weight. Examples with no active positions contribute zero.

We tokenize each TinyGSM example as a beginning-of-sequence token, the question, the literal two-character separator \texttt{\textbackslash n}, the Python solution, and an end-of-sequence token. We exclude examples longer than 512 tokens and pad the remaining examples to this length without packing different examples together. The question and separator remain clean context. Response, end-of-sequence, and padding positions participate in corruption and training. A fixed 1\% split is reserved for validation with split seed 42. The tokenizer is \texttt{HuggingFaceTB/SmolLM-135M}.

Training times are stratified using the implementation's antithetic sampler for a uniform distribution on $[\varepsilon,1]$, with $\varepsilon=10^{-3}$. \newrevision{The probability of keeping a token unchanged follows the stabilized linear schedule} $\bar\alpha_t=\varepsilon+(1-\varepsilon)(1-t)$. It uses no adaptive schedule or importance sampling. The model does not receive a time-conditioning signal.

\begin{table}[ht]
\centering
\caption{\textbf{Shared TinyGSM training recipe.} Sequence-level and token-wise continuations use the same architecture, data processing, and optimizer settings.}
\label{tab:tinygsm-training}
{\small
\renewcommand{\arraystretch}{1.1}
\begin{tabular}{@{}>{\raggedright\arraybackslash}p{0.31\linewidth}>{\raggedright\arraybackslash}p{0.66\linewidth}@{}}
\toprule
Component & Setting \\
\midrule
Denoiser & Bidirectional DiT with 12 blocks, 12 attention heads, hidden width 768, and conditioning width 128 \\
Regularization & Dropout 0.1 and zero weight decay \\
Sequence length & 512 tokens including prompt, response, and padding \\
Optimizer & AdamW, learning rate $3\times10^{-4}$, $(\beta_1,\beta_2)=(0.9,0.999)$, $\epsilon=10^{-8}$ \\
Learning-rate schedule & 2,500 warmup updates followed by a constant rate \\
Optimization & Global batch size 512, gradient clipping 1.0, and training seed 1 \\
Weight averaging & EMA decay 0.9999 \\
Training duration & Common sequence-level $k=1$ prefix through 200k updates, followed by 50k continuation updates \\
\bottomrule
\end{tabular}
}
\end{table}

The one-evaluation token-wise control uses EMA weights, exact argmax tokens, and confidence threshold zero, revealing all missing tokens in a single forward. In the 32-evaluation ancestral comparison, $T=0$ takes the argmax over clean-token predictions \newrevision{and still samples reveal positions from the ancestral transition}. It is not an argmax over the complete reverse transition including the mask state.

\begin{table}[ht]
\centering
\caption{\revision{GSM8K accuracy (\%) \reviewrevision{$\uparrow$} with a shared initialization and \presentationrevision{fixed-NFE confidence sampling}. All checkpoints use 250k total updates, including the common $k=1$ prefix through 200k. Each value is an observed correct fraction over 1,319 examples.}}
\label{tab:tinygsm-matched-budget}
{\small
\begin{tabular}{lrrrrr}
\toprule
Objective & 1 NFE & 2 NFE & 4 NFE & 8 NFE & 16 NFE \\
\midrule
Sequence $k=1$ & 0.91 & 0.91 & 8.11 & 26.23 & 40.56 \\
Sequence $k=2$ & 0.99 & 1.29 & 10.61 & 28.58 & 43.06 \\
Sequence $k=4$ & 1.82 & 2.58 & 16.68 & 35.33 & 46.47 \\
Sequence $k=8$ & 2.73 & 7.73 & 28.89 & 43.97 & 49.20 \\
Sequence $k=16$ & 6.29 & 18.04 & 34.65 & 39.04 & 40.56 \\
Token $\alpha=0.25$ & 0.83 & 1.74 & 12.66 & 33.28 & 44.35 \\
Token $\alpha=0.375$ & 1.67 & 3.56 & 17.97 & 35.63 & 44.12 \\
Token $\alpha=0.5$ & 1.36 & 5.46 & 19.86 & 36.01 & 39.95 \\
Token $\alpha=0.75$ & 2.12 & 5.23 & 7.66 & 7.88 & 7.88 \\
\bottomrule
\end{tabular}

}
\end{table}

The fixed-budget objective comparison evaluates each checkpoint at 1, 2, 4, 8, and 16 forwards using the same sampling protocol. The confidence threshold is 0.999, tokens are exact argmax predictions, and all models use four GPUs with batch size 16 per GPU, EMA weights, and seed 1. Every example executes the prescribed number of forwards. Table~\ref{tab:tinygsm-matched-budget} reports all points.

\paragraph{\presentationrevision{MDLM with fixed-NFE confidence sampling.}}
We evaluate the released MDLM checkpoint from the $\mathbb{S}$-FLM repository on the same 1,319 problems at $R\in\{1,2,4,8,16,32,64\}$. We use the same \presentationrevision{fixed-NFE confidence sampling} implementation and evaluation configurations as \newrevision{our method}, including the tokenizer, confidence threshold, argmax token selection, and EMA weights. Each example's recorded forward count is verified against $R$. The respective correct counts are 10, 9, 56, 265, 497, 579, and 536. This controls the \newrevision{sampler}, while the training objectives and initialization histories differ.

\paragraph{Reevaluation of released checkpoints.}
We evaluate the released TinyGSM MDLM\presentationrevision{~\citep{sahoo2024mdlm}}, DUO\presentationrevision{~\citep{sahoo2025duo}}, and $\mathbb{S}$-FLM\presentationrevision{~\citep{deschenaux2026sflm}} checkpoints with the authors' sampling implementation. MDLM and DUO use ancestral sampling at $T=0.1$. Both $\mathbb{S}$-FLM curves use the spherical backbone with the trained adaptive truncated schedule, with exact velocity at $T=0.1$ or top-1 velocity at $T=1$, followed by greedy final decoding. We use EMA weights, seed 1, four GPUs, and batch size 32 per GPU. The plot and table report integer correct counts divided by 1,319, rounded to one decimal in the table.

\paragraph{Published table values.}

FMLM+ uses greedy posterior refinement with threshold 0.999 from Table~8 of \citet{agarwal2026posterior}. We show Init in the teaser and all three variants in Table~\ref{tab:tinygsm-published}. DBTM uses greedy refinement with threshold 0.8 from Table~8 of \citet{tang2026dbtm}. \presentationrevision{For IDLM, we use the published values from Table~1 of \citet{li2026idlm} at 32 and 64 NFE. We evaluate the released IDLM--MDLM and IDLM--DUO TinyGSM checkpoints at 4, 8, and 16 NFE using the $\mathbb{S}$-FLM ancestral sampler at $T=1$, EMA weights, seed 1, and batch size 32 on one GPU. We verify the exact forward count for all 1,319 examples. The correct counts are 5, 24, and 86 for IDLM--MDLM and 30, 93, and 154 for IDLM--DUO. As a reproduction check, our 32-NFE evaluations yield 12.51\% and 17.06\%, compared with the published 12.81\% and 15.39\%.} These methods \newrevision{use} their published training recipes. FMLM+ Init and the distillation variants incur teacher training in addition to their own training, so this comparison does not match total training compute.

For context, the autoregressive references in \citet{deschenaux2026sflm} reach 53.9\% with sampling and 63.3\% with greedy decoding. 

}

\noindent\begin{minipage}{\linewidth}
\centering
\captionof{table}{\textbf{Ancestral accuracy (\%) \reviewrevision{$\uparrow$} at 32 NFE.} Checkpoints share the $k=1$ initialization and 250k total updates. At $T=0$, clean-token predictions are argmax while reveal positions remain stochastic.}
\label{tab:tinygsm-fixed}
\begin{tabular}{lrr}
\toprule
Training parameter & Ancestral ($T=1$) & Ancestral ($T=0$) \\
\midrule
k=1 & 8.19 & 26.00 \\
k=2 & 10.46 & 26.61 \\
k=4 & 15.09 & 29.80 \\
k=8 & 21.30 & 34.95 \\
k=16 & 32.52 & 37.07 \\
\bottomrule
\end{tabular}

\end{minipage}

\subsection{\presentationrevision{Adaptive confidence sampling}}
\label{app:tinygsm-adaptive}
\presentationrevision{Here we use adaptive confidence sampling from Fast-dLLM~\citep{wu2025fastdllm} (Section~\ref{sec:parallel-decoding}) with argmax tokens and no prescribed NFE. We vary the confidence threshold, so each point reports the resulting average NFE (Figure~\ref{fig:tinygsm}, Table~\ref{tab:tinygsm-confidence}).} At threshold 0.9, $k=8$ reaches 49.43\% at 17.83 NFE, while $k=1$ reaches 43.06\% at 57.59 NFE. The $k=16$ run attains 40.03\% at 5.70 NFE. Token-wise $\alpha=0.25$ reaches 47.31\% at 32.00 NFE, so the sequence-level $k=8$ point has 2.12 percentage points higher accuracy with 1.79 times fewer evaluations.

\noindent\begin{minipage}{\linewidth}
\centering
\includegraphics[width=\linewidth]{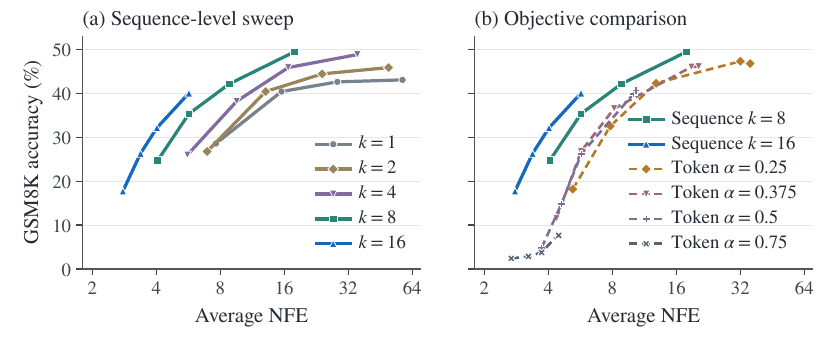}
\captionof{figure}{\textbf{Accuracy--NFE trade-offs under \presentationrevision{adaptive confidence sampling}.} Left: AlphaDLM (ours) parameter sweep. Right: sequence-level and token-wise objectives. All checkpoints share the $k=1$ initialization and 250k total updates. Points vary the confidence threshold. Higher accuracy and fewer evaluations are preferable.}
\label{fig:tinygsm}
\end{minipage}

\noindent\begin{minipage}{\linewidth}
\centering
\captionof{table}{\textbf{\presentationrevision{Adaptive confidence sampling} from a common initialization.} Each entry gives accuracy (\%) \reviewrevision{$\uparrow$} / average NFE \reviewrevision{$\downarrow$} on all 1,319 examples. The checkpoints are evaluated at 250k updates after the common sequence-level $k=1$ initialization at 200k.}
\label{tab:tinygsm-confidence}
{\small
\begin{tabular}{lrrrr}
\toprule
\presentationrevision{Objective} & $\tau=0.6$ & $\tau=0.7$ & $\tau=0.8$ & $\tau=0.9$ \\
\midrule
\presentationrevision{Sequence $k=1$} & 28.58 / 7.64 & 40.41 / 15.54 & 42.61 / 28.42 & 43.06 / 57.59 \\
\presentationrevision{Sequence $k=2$} & 26.76 / 6.94 & 40.41 / 13.09 & 44.43 / 24.12 & 45.87 / 49.54 \\
\presentationrevision{Sequence $k=4$} & 26.00 / 5.61 & 38.21 / 9.57 & 45.87 / 16.72 & 48.82 / 35.32 \\
\presentationrevision{Sequence $k=8$} & 24.72 / 4.06 & 35.33 / 5.69 & 42.15 / 8.83 & 49.43 / 17.83 \\
\presentationrevision{Sequence $k=16$} & 17.74 / 2.79 & 26.31 / 3.37 & 32.15 / 4.03 & 40.03 / 5.70 \\
\midrule
\presentationrevision{Token $\alpha=0.25$} & \presentationrevision{18.20 / 5.21} & \presentationrevision{32.52 / 7.82} & \presentationrevision{42.30 / 12.86} & \presentationrevision{47.31 / 32.00} \\
\presentationrevision{Token $\alpha=0.375$} & \presentationrevision{11.52 / 4.37} & \presentationrevision{26.76 / 5.70} & \presentationrevision{36.54 / 8.15} & \presentationrevision{46.02 / 18.84} \\
\presentationrevision{Token $\alpha=0.5$} & \presentationrevision{4.85 / 3.72} & \presentationrevision{14.78 / 4.60} & \presentationrevision{26.23 / 5.72} & \presentationrevision{39.88 / 10.06} \\
\presentationrevision{Token $\alpha=0.75$} & \presentationrevision{2.43 / 2.68} & \presentationrevision{2.88 / 3.23} & \presentationrevision{3.79 / 3.72} & \presentationrevision{7.66 / 4.48} \\
\bottomrule
\end{tabular}}
\end{minipage}

\section{SDAR experimental details}
\label{app:sdar-protocols}

\paragraph{Data and training.}
We fine-tune \texttt{JetLM/SDAR-1.7B-Chat} on the same data for all objectives within each domain.
For mathematics, we use 272,872 examples from the CoT subset of Nemotron-SFT-Math-v4.\footnote{\url{https://huggingface.co/datasets/nvidia/Nemotron-SFT-Math-v4}.}
We \newrevision{use} assistant content, omit separate reasoning fields, remove lexical overlap candidates with GSM8K and MATH500, and keep complete examples with at most 256 prompt and 1,280 response tokens.
For code, we use OpenCodeInstruct~\citep{ahmad2025opencodeinstruct}.
For all setups, we use a batch size of 128 and a learning rate of $5 \times 10^{-6}$, and train for 4,000 steps.
CE+CAP~\citep{bie2025llada2} uses a correctness-conditioned entropy regularizer of weight $0.5$ to encourage confident correct predictions.

{
\paragraph{Block-level objective.}
\newrevision{Our method applies} Equation~\ref{eq:effective-alpha} separately to each four-token response block. For the masked supervised positions $M_b$ in block $b$, we use
\begin{equation}
 s_b=\frac{1}{|M_b|}\sum_{j\in M_b}\log p_{\theta,j}(y_j\mid\mathrm{context}),
 \qquad \ell_b(k)=\frac{1-\exp(k s_b)}{k}.
\end{equation}
The $k=0$ limit is mean cross-entropy within the block. A block with no active targets contributes zero. We sum block losses within each example, divide by a fixed dataset estimate of the mean number of valid response blocks, and average over examples. Thus, the nonlinear transformation couples targets within a block rather than across the entire response.

\paragraph{Confidence-aware parallel training.}
CAP adds an entropy penalty on correctly predicted masked targets. Let $C$ contain masked supervised positions whose logit argmax equals the target and let $r_j=\operatorname{softmax}(z_j/0.5)$, where $z_j$ denotes the vocabulary logits. The objective is
\begin{equation}
 \mathcal L_{\mathrm{CE+CAP}}=\mathcal L_{\mathrm{CE}}+
 \frac{0.5}{|C|}\sum_{j\in C}H(r_j).
\end{equation}
The entropy term is zero when $C$ is empty. The correctness gate is discrete, while gradients pass through the entropy of the selected distributions. The normalization uses the correct masked positions in each microbatch.

\par}

\par\medskip\paragraph{Evaluation.}
We evaluate accuracy on 1,319 GSM8K problems~\citep{cobbe2021verifiers} and 500 MATH500 problems~\citep{hendrycks2021math}, and pass@1 on 164 HumanEval-Instruct problems~\citep{chen2021evaluating} and 427 MBPP-Instruct problems~\citep{austin2021program}.
\presentationrevision{We use adaptive confidence sampling within four-token blocks}, with confidence thresholds $\tau\in\{0.75,0.80,0.85,0.90,0.95,0.97,0.99\}$ for mathematics and $\tau\in\{0.60,0.70,0.80,0.85,0.90,0.95,0.97,0.99\}$ for code.

{
Candidate tokens are selected by argmax ($T=0$). At each iteration, the \newrevision{sampler} accepts remaining positions whose confidence is strictly above the threshold. If none qualifies, it accepts the highest-confidence position. Accepted tokens are not remasked. Each block therefore requires at most four denoising evaluations. Generation uses prefix KV caching and response limits of 1,280 tokens for mathematics and 384 for code, stopping after completing the block containing EOS or the turn-end token.

\par}

\noindent\begin{minipage}{\linewidth}
\centering
\includegraphics[width=0.85\linewidth]{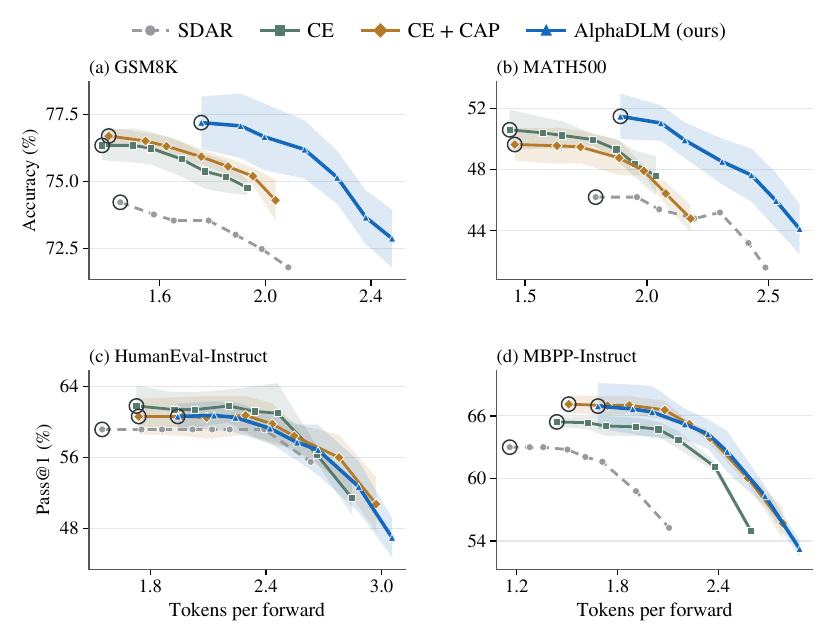}
\captionof{figure}{\textbf{SDAR accuracy--TPF trade-offs.} AlphaDLM \newrevision{(ours)} uses $k=0.8$ for mathematics and $k=0.4$ for code. Points show means at each confidence threshold, with bands of $\pm1$ sample standard deviation in accuracy across runs. \presentationrevision{Rings mark threshold 0.99. Axes match Figure~\ref{fig:sdar-math}~(a--d).}}
\label{fig:sdar-uncertainty}
\end{minipage}\par\medskip

\paragraph{Tokens per forward (TPF).}
For a benchmark with $N$ examples, let $L_i$ be the number of returned response tokens and $D_i$ the number of denoising forwards used for example $i$.
We compute
\begin{equation}
    \mathrm{TPF} = \frac{\sum_{i=1}^{N} L_i}{\sum_{i=1}^{N}(D_i+1)},
\end{equation}
where the additional forward accounts for prompt prefill once per example.
The numerator excludes prompt tokens, the first EOS or stop token, and all subsequent tokens, and respects the response-length limit.
Denoising forwards are counted separately for each active example, including all steps in the block containing its stop token. Additional forwards used only to update the KV cache are excluded.
We aggregate token and forward counts over the benchmark before taking their ratio, then average the resulting TPF values across runs at each confidence threshold.
With four-token blocks, TPF is at most four. Prompt prefill and partially filled final blocks reduce the measured value even when each block requires only one denoising step.

\paragraph{Aggregation.}
At each threshold, we average accuracy and TPF separately across 5 runs.
Figure~\ref{fig:sdar-uncertainty} shows the main-text means with $\pm1$ sample standard deviation in accuracy.

\paragraph{Parameter sweep.}
Figure~\ref{fig:sdar-ablation-uncertainty} compares $k\in\{0,0.4,0.8,1.2,1.6,2.0\}$ on mathematics with a learning rate of $5\times10^{-6}$ and a batch size of 128, using 4,000 training steps.

\par\medskip\noindent\begin{minipage}{\linewidth}
\centering
\includegraphics[width=\linewidth]{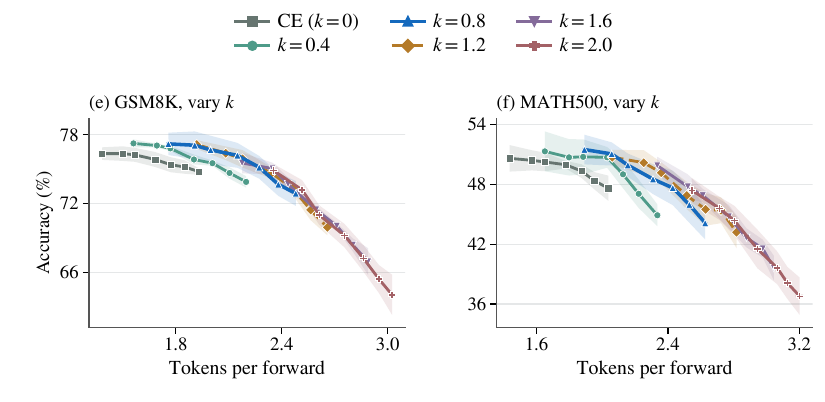}
\captionof{figure}{\textbf{SDAR parameter sweep for AlphaDLM (ours) and CE.} \presentationrevision{Panels (e,f) use the same axes as Figure~\ref{fig:sdar-ablation}~(e,f).} Bands show $\pm1$ sample standard deviation in accuracy.}
\label{fig:sdar-ablation-uncertainty}
\end{minipage}\par\medskip

\section{Sensitivity to training initialization}
\label{app:tinygsm-initialization}
\reviewrevision{The main TinyGSM comparison starts from a sequence-level $k=1$ checkpoint at 200k updates. Here we repeat the objective comparison from a CE-trained checkpoint at the same update count. Both sets of continuations train for another 50k updates. We compare sequence-level $k\in\{2,4,8,16\}$ and token-wise $\alpha\in\{0.25,0.5,0.75\}$, using the same architecture, optimizer settings, and \presentationrevision{fixed-NFE confidence sampler}. Each evaluation uses EMA weights, exact argmax token selection, confidence threshold 0.999, and exactly 1, 2, 4, 8, or 16 forwards on all 1,319 GSM8K problems. There is one training seed per recipe.}

\begin{figure}[!htb]
\centering
\includegraphics[width=\linewidth]{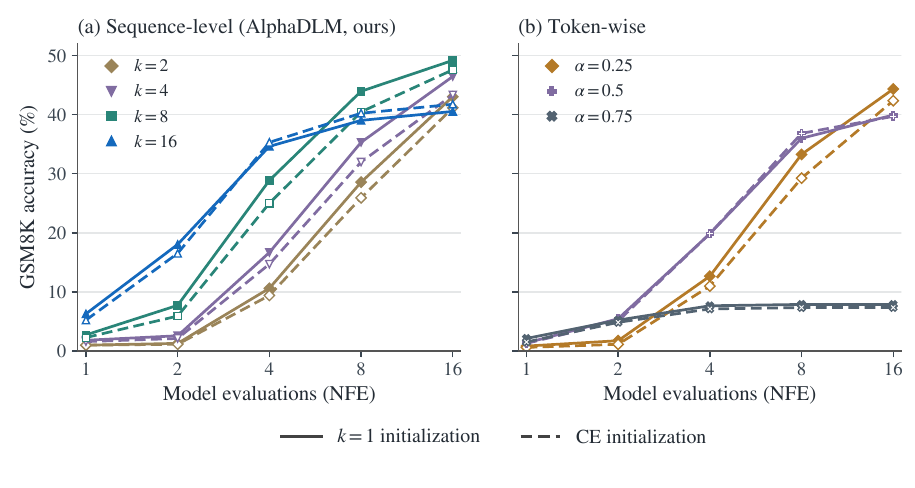}
\caption{\reviewrevision{\textbf{Objective comparisons under two initializations.} Colors identify the continuation objective. Solid lines start from $k=1$ and dashed lines from CE, both at 200k updates and evaluated at 250k. Only parameters available under both initializations are shown.}}
\label{fig:tinygsm-initialization}
\end{figure}

\reviewrevision{The qualitative advantage of sequence-level training persists under CE initialization (Figure~\ref{fig:tinygsm-initialization}). At four NFE, sequence-level $k=16$ reaches 35.33\%, compared with 19.86\% for the strongest tested token-wise setting. At sixteen NFE, $k=8$ reaches 47.61\%, compared with 42.38\% for the strongest tested token-wise setting. Among the tested sequence-level parameters, $k=16$ is best at 1--4 NFE and $k=8$ at 8--16 NFE under both initializations.}

\reviewrevision{The $k=1$ initialization generally yields higher accuracy for subsequent sequence-level $k=2,4,8$ training. However, CE initialization improves $k=16$ at 4--16 NFE, by 0.68--1.22 percentage points. Token-wise differences also depend on the loss parameter and evaluation budget (Table~\ref{tab:tinygsm-initialization}). These single-seed comparisons support robustness of the objective comparison, rather than a uniform advantage of either initialization.}

\begin{table}[!htb]
\centering
\caption{\reviewrevision{\textbf{GSM8K accuracy (\%) $\uparrow$ under two initializations.} Each paired row uses the same continuation objective and decoding protocol. The final row reports the CE training run continued to 250k, whose 200k checkpoint initializes the CE-start experiments. It is a reference, not a matched-objective comparison with the $k=1$ continuation.}}
\label{tab:tinygsm-initialization}
{\small\setlength{\tabcolsep}{5pt}\begin{tabular}{llrrrrr}
\toprule
Objective & Initialization & 1 NFE & 2 & 4 & 8 & 16 \\
\midrule
Sequence $k=2$ & $k=1$ & 0.99 & 1.29 & 10.61 & 28.58 & 43.06 \\
Sequence $k=2$ & CE & 0.99 & 1.14 & 9.40 & 25.93 & 41.17 \\
\addlinespace[3pt]
Sequence $k=4$ & $k=1$ & 1.82 & 2.58 & 16.68 & 35.33 & 46.47 \\
Sequence $k=4$ & CE & 1.59 & 2.12 & 14.71 & 31.99 & 43.37 \\
\addlinespace[3pt]
Sequence $k=8$ & $k=1$ & 2.73 & 7.73 & 28.89 & 43.97 & 49.20 \\
Sequence $k=8$ & CE & 2.27 & 5.91 & 25.02 & 40.49 & 47.61 \\
\addlinespace[3pt]
Sequence $k=16$ & $k=1$ & 6.29 & 18.04 & 34.65 & 39.04 & 40.56 \\
Sequence $k=16$ & CE & 5.23 & 16.53 & 35.33 & 40.26 & 41.77 \\
\midrule
Token $\alpha=0.25$ & $k=1$ & 0.83 & 1.74 & 12.66 & 33.28 & 44.35 \\
Token $\alpha=0.25$ & CE & 0.61 & 1.14 & 10.99 & 29.26 & 42.38 \\
\addlinespace[3pt]
Token $\alpha=0.5$ & $k=1$ & 1.36 & 5.46 & 19.86 & 36.01 & 39.95 \\
Token $\alpha=0.5$ & CE & 1.29 & 5.08 & 19.86 & 36.85 & 39.73 \\
\addlinespace[3pt]
Token $\alpha=0.75$ & $k=1$ & 2.12 & 5.23 & 7.66 & 7.88 & 7.88 \\
Token $\alpha=0.75$ & CE & 1.52 & 4.85 & 7.13 & 7.35 & 7.35 \\
\midrule
CE & CE & 0.45 & 0.53 & 5.69 & 18.50 & 36.32 \\
\bottomrule
\end{tabular}
}
\end{table}

\reviewrevision{This ablation covers fixed-budget decoding and the shared parameter grid only. The $k=1$-start results for token-wise $\alpha=0.375$, sequence-level $k=32,64$, ancestral decoding, and pass@$K$ remain separate experiments.}

\raggedbottom
\Needspace{20\baselineskip}
\section{Multiple-answer generation}
\label{app:passk}

We study how the training objective affects the benefit of generating several answers to the same problem.
We use the three sequence-level checkpoints with $k\in\{1,8,16\}$ at 250k training updates, all continued from the common $k=1$ checkpoint at 200k updates.
For each of the 1,319 GSM8K test problems, we generate $n=32$ answers with independently seeded ancestral sampling.
We consider $R\in\{4,16\}$ model evaluations per answer and temperatures $T\in\{0,1\}$, giving 12 configurations and 506,496 generated answers in total.
At $T=0$, the clean-token prediction is the argmax, while the ancestral reveal trajectory remains random.
All runs use EMA weights, top-$p=1$, the 512-token sequence limit, and the same program verifier as the single-answer evaluation.

Let $c_i$ denote the number of correct answers among the 32 samples for problem $i$.
We estimate the probability that at least one of $K$ answers is correct using the unbiased pass@$K$ estimator of \citet{chen2021evaluating},
\begin{equation}
\widehat{\mathrm{pass@}K}
=\frac{1}{1319}\sum_{i=1}^{1319}
\left(1-\frac{\binom{32-c_i}{K}}{\binom{32}{K}}\right),
\qquad K\in\{1,2,4,8,16,32\},
\label{eq:passk_estimator}
\end{equation}
where the numerator is zero when $K>32-c_i$.
\newrevision{We include repeated answers in the estimator.}
This metric measures verifier-assisted coverage among $K$ attempts, with a total generation budget of $KR$ model evaluations per problem.
The sampling repeats use one trained checkpoint for each value of $k$.

Figure~\ref{fig:tinygsm_passk} and Table~\ref{tab:tinygsm_passk} show that the preferred training parameter depends on the number of attempts.
With $T=0$ and $R=16$, increasing $k$ from 1 to 16 improves pass@1 from 16.83\% to 32.24\%.
At 32 attempts, $k=8$ reaches 74.22\%, compared with 72.71\% for $k=16$ and 68.39\% for $k=1$.
At $T=1$, $k=16$ has the highest observed pass@$K$ across the tested values of $K$ at both per-answer budgets.
\par\medskip\noindent\begin{minipage}{\linewidth}
  \centering
  \setlength{\abovecaptionskip}{5pt}
  \includegraphics[width=0.86\linewidth,trim=0 8bp 0 0,clip]{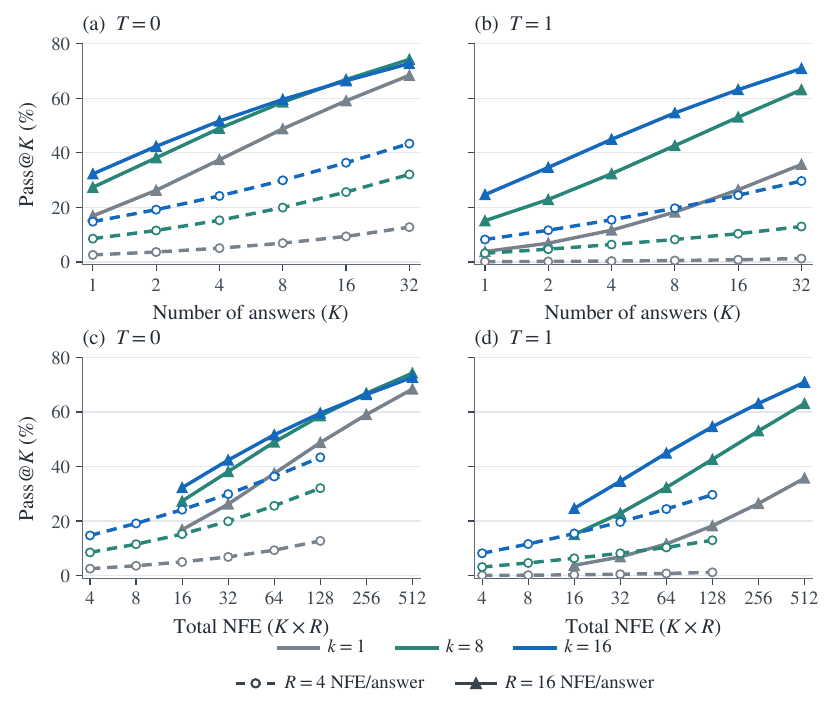}
  \captionof{figure}{\presentationrevision{\textbf{GSM8K pass@$K$ with ancestral sampling.} Top: coverage versus number of answers. Bottom: the same results versus total NFE, $KR$. Colors identify $k$, and line styles identify NFE per answer $R$. Estimates use 32 attempts per problem. Reveal positions remain random at $T=0$.}}
  \label{fig:tinygsm_passk}
\end{minipage}\par\medskip

\presentationrevision{At each shared total budget (16, 32, 64, and 128 NFE), using sixteen evaluations per answer gives higher coverage than using four for every tested checkpoint and temperature.}

\begin{table}[!htb]
  \centering
  \small
  \setlength{\tabcolsep}{4pt}
  \caption{GSM8K pass@$K$ (\%) \reviewrevision{$\uparrow$} estimated from 32 answers per problem.
  $T$ is the sampling temperature, $R$ is NFE per answer, and $k$ is the training parameter.
  Bold entries mark the highest value across training parameters for the same $T$, $R$, and $K$.}
  \label{tab:tinygsm_passk}
  {\centering\begin{tabular}{rrrrrrrrr}
\toprule
$T$ & $R$ & $k$ & Pass@1\reviewrevision{$\uparrow$} & Pass@2\reviewrevision{$\uparrow$} & Pass@4\reviewrevision{$\uparrow$} & Pass@8\reviewrevision{$\uparrow$} & Pass@16\reviewrevision{$\uparrow$} & Pass@32\reviewrevision{$\uparrow$} \\
\midrule
0 & 4 & 1 & 2.56 & 3.62 & 5.03 & 6.86 & 9.33 & 12.74 \\
0 & 4 & 8 & 8.51 & 11.52 & 15.23 & 19.91 & 25.60 & 32.07 \\
0 & 4 & 16 & \textbf{14.73} & \textbf{19.14} & \textbf{24.17} & \textbf{29.89} & \textbf{36.34} & \textbf{43.37} \\
\midrule
0 & 16 & 1 & 16.83 & 26.23 & 37.45 & 48.77 & 59.04 & 68.39 \\
0 & 16 & 8 & 27.33 & 38.14 & 48.95 & 58.56 & \textbf{66.85} & \textbf{74.22} \\
0 & 16 & 16 & \textbf{32.24} & \textbf{42.35} & \textbf{51.62} & \textbf{59.55} & 66.39 & 72.71 \\
\midrule
1 & 4 & 1 & 0.09 & 0.17 & 0.30 & 0.50 & 0.80 & 1.21 \\
1 & 4 & 8 & 3.16 & 4.65 & 6.36 & 8.20 & 10.33 & 12.96 \\
1 & 4 & 16 & \textbf{8.20} & \textbf{11.59} & \textbf{15.42} & \textbf{19.67} & \textbf{24.42} & \textbf{29.64} \\
\midrule
1 & 16 & 1 & 3.79 & 6.84 & 11.62 & 18.25 & 26.43 & 35.71 \\
1 & 16 & 8 & 15.08 & 22.87 & 32.31 & 42.62 & 53.04 & 63.08 \\
1 & 16 & 16 & \textbf{24.64} & \textbf{34.61} & \textbf{44.91} & \textbf{54.59} & \textbf{63.12} & \textbf{70.89} \\
\bottomrule
\end{tabular}
\par}
\end{table}

\end{document}